\newtheorem{theorem}{Theorem}[section]
\newtheorem{lemma}[theorem]{Lemma}
\theoremstyle{remark}
\newtheorem{proposition}[theorem]{Proposition}
\newtheorem{remark}{Remark}
\newtheorem{assumption}{Assumption}
\newcommand{\vw}{\bm w}
\newcommand{\vx}{{\bm{x}}}
\newcommand{\vz}{{\bm{z}}}
\newcommand{\vI}{{\bm{I}}}
\newcommand{\Bl}{{\Big |}}
\newcommand{\pr}{{\mathbb{P}}}
\newcommand{\var}{\mathrm{var}}
\definecolor{gen}{RGB}{0, 0, 255}
\def\R{\mathcal{R}}
\def\E{\mathbb{E}\,}
\def\de{\overset{\Delta}{=}}
\DeclareMathOperator*{\argmin}{arg\,min}
\def\T{{ \mathrm{\scriptscriptstyle T} }}
\def\i{\mathbbm{1}}
\def\P{\mathbb{P}\,}
\def\l{\textrm{loss}}
\def\Ltwo{\ell_2}
\def\Lone{\ell_1}
\begin{document}

\title{The Rate of Convergence of Variation-Constrained \\ Deep Neural Networks}

\author{Gen~Li and Jie~Ding 
\thanks{G.~Li is with the Department of Statistics and Data Science, University of Pennsylvania Wharton School. J.~Ding is with the School of Statistics, University of Minnesota Twin Cities.
}
}




\maketitle

\begin{abstract}
Multi-layer feedforward networks have been used to approximate a wide range of nonlinear functions. An important and fundamental problem is to understand the learnability of a network model through its statistical risk, or the expected prediction error on future data. To the best of our knowledge, the rate of convergence of neural networks shown by existing works is bounded by at most the order of $n^{-1/4}$ for a sample size of $n$. In this paper, we show that a class of variation-constrained neural networks, with arbitrary width, can achieve near-parametric rate $n^{-1/2+\delta}$ for an arbitrarily small positive constant $\delta$. It is equivalent to $n^{-1 +2\delta}$ under the mean squared error. This rate is also observed by numerical experiments. The result indicates that the neural function space needed for approximating smooth functions may not be as large as what is often perceived. Our result also provides insight to the phenomena that deep neural networks do not easily suffer from overfitting when the number of neurons and learning parameters rapidly grow with $n$ or even surpass $n$. We also discuss the rate of convergence regarding other network parameters, including the input dimension, network layer, and coefficient norm.

\end{abstract}

\begin{IEEEkeywords}
Model complexity, Neural Network, Statistical risk.
\end{IEEEkeywords}

\section{Introduction} \label{sec_intro}

Suppose we have $n$ labeled observations $\{(\vx_i, y_i)\}_{i=1,\ldots,n}$, 
where $y_i$'s are continuously-valued responses or labels. 
We assume that the underlying data generating model is
$$
y_i = f_*(\vx_i) + \varepsilon_i 
$$
for some unknown function $f_*(\cdot)$,
where $\vx_i$'s $ \in \mathbb{X} \subset \mathbb{R}^d$ are independent and identically distributed (IID),
 and $\varepsilon_i$'s are IID.
We will measures the predictive performance of a learned neural networks model $f$ using the $\Ltwo$ statistical risk
\begin{align}
\R_2(f) \de \sqrt{\E (f(\vx) - f_*(\vx))^2}, \label{eq_l2risk}
\end{align}
where $\vx$ denotes an observation independent with the data used to train $f$. 
The smaller $\R_2(f)$, the better.
We will also study $\Lone$ risks, which will be introduced in Subsection~\ref{subsec_loss}. 
Our work addresses the following question.

\textit{How does a multi-layer feedforward network's predictive performance relate to the sample size used for training? }

\subsection{Background of deep neural networks (DNNs)}

Neural networks have been successfully applied to modeling nonlinear regression functions or classification decision boundaries in various applications. 
Despite its success in practical applications, a systematic understanding of its theoretical limit remains an ongoing challenge and has motivated research from various perspectives. 
It was shown in \cite{cybenko1989approximations} that any continuous function can be approximated arbitrarily well by a two-layer perceptron with sigmoid activation functions.
An approximation error bound of using two-layer neural networks to fit arbitrary smooth functions was established in \cite{barron1993universal,barron1994approximation}, where statistical risk bounds were also developed.
A theoretical connection between deep networks and approximation theory based on spline functions was established in \cite{Baraniuk}.
A dimension-free Rademacher complexity for deep ReLU neural networks was recently developed~\cite{golowich2017size,barron2019complexity}.
A dimension-free statistical risk for two-layer neural networks was studied in \cite{DingNN} using a combined analysis of Rademacher complexity and the number of neurons.
Based on a {contraction lemma}, 
a series of variation-based complexities and their corresponding generalization errors were developed~\cite{neyshabur2015norm}.
The neural network learning problem was also cast as a tensor decomposition problem through the score function of the known or estimated input distribution~\cite{janzamin2015beating,ge2017learning,mondelli2018connection}. 
Most recently, tight error bounds have been established for deep neural networks that do not suffer from the curse of dimensionality, based on the assumption that the data are generated by a neural network model of a parsimonious structure. 
In that direction, the work of \cite{schmidt2017nonparametric} proved that specific deep neural networks with few non-zero network parameters could achieve minimax rates of convergence. 
In \cite{bauer2019deep}, an error bound that is free from the input dimension was developed for a class of generalized hierarchical interaction models.

\subsection{Related results on nonparametric regression} \label{subsec_nonpar}
The $\Lone$ or $\Ltwo$ risks of typical parametric models such as finite-dimensional linear regressions are at the order of $n^{-1/2}$, where $n$ denotes the sample size.
It is known that the minimax rate of convergence for smoothness function classes (e.g., Sobolev and Besov) is often at the order of $O(n^{-\alpha/(d+2\alpha)})$, where $\alpha$ is the order of smoothness (defined by the largest order of derivatives) and $d$ is the input data dimension.
Obtaining the risk bound for general nonparametric regression models such as neural networks is highly nontrivial since the networks involve a large number of parameters to ensure the fitness ability (i.e., to make approximation error small enough).
The work of \cite{barron1993universal,barron1994approximation} proved that the model class of two-layer feedforward networks has an approximation error bound of $O(r^{-1/2})$ when approximating a particular class of smooth functions, where $r$ denotes the number of neurons. The same work further developed a statistical risk error bound of $O(n^{-1/4})$, which is among the tightest statistical risk bounds for two-layer neural networks up to the authors' knowledge.
A similar rate was also shown in~\cite{yang1999information}.
The above risk bound was derived based on an optimal bias-variance tradeoff that involves an appropriate choice of $r$. 
Also, to bound the statistical risk, it is tempting to treat all the neural weights as free parameters and use the Akaike information criterion-type statistical error bounds~\cite{DingOverview}. Those bounds were often derived from second-order Taylor expansion in conjunction with some regular conditions~\cite{DingLOL}.
Recently, variation-constrained neural networks attracted a lot of attention, as they allow for a vast number of parameters relative to the sample size.
Using Rademacher complexity as machinery, several recent works have established statistical risk bounds for DNNs under the norm constraints, which are at the order of either $n^{-1/4}$ or $n^{-1/6}$ (equivalently, the square root or cube root of our rate $n^{-1/2}$)~\cite{neyshabur2015norm,golowich2017size,barron2019complexity}.

\subsection{Main contributions}

We show that variation-constrained deep neural networks can achieve near-parametric rate $n^{-1/2+\delta}$, for an arbitrarily small positive constant $\delta$. 
The result has the following implications.
First, to learn smooth regression functions, where the smoothness is indexed by its variation, the class of deep neural networks can enjoy a statistical efficiency at the same order of classical parametric models (namely around $n^{-1/2}$). 
Second, the neural function space needed for approximating smooth functions may not be as large as what is often perceived. 
In particular, we will show that the minimax rate of the statistical risk is at least $n^{-1/2}$, which implies that our derived bound is rather tight.
Third, a neural network's predictive performance does not necessarily depend on its `nominal complexity' as described by its number of neurons and layers. Instead, it only depends on the property of the underlying function and $n$. To some extent, this explains the interesting phenomena that \textit{deep neural networks do not easily suffer from overfitting when the number of unknown parameters is huge compared with $n$}. 

From the technical perspective, we derived the risk analysis from a variation-based approach, which is different from earlier work that used the number of neurons $r$ to characterize bias-variance tradeoffs. In particular, the risk bound of $O(n^{-1/4})$ in \cite{barron1994approximation} is based on an appropriate choice of $r$, but our tighter bound $O(n^{-1/2+\delta})$ is based on an analysis of the neural weights regardless of $r$.   
Our analysis was inspired by but different from the norm-based complexity analysis~\cite{neyshabur2015norm,barron2019complexity}. 
In particular, the standard technical tool of contraction lemma used to derive Rademacher complexity cannot apply to establishing the $O(n^{-1/2+\delta})$ statistical risk studied in our context. Our proof is based on new technical analyses.
 
Apart from the statistical risk based on the $\Ltwo$ loss, we will analyze the $\Lone$ loss, which also ranks among the most popular loss functions in practice.
We will show that a similar convergence rate can be derived, and the dependence on the input dimension can be removed. 
To the best of the authors' knowledge, this is the first result regarding the $\Lone$ statistical risk for deep neural networks.
 
The outline of the paper is given below. 
In Section~\ref{sec_background}, we introduce the formulation and some notations.
In Section~\ref{sec_main}, we introduce the main result and make several remarks to illustrate the implications better. 
In Section~\ref{sec_L1}, we introduce counterpart results when the typical $\Ltwo$ loss is replaced with $\Lone$ loss and show how the bound as a function of the norm may be further tightened.
We conclude the work in Section~\ref{sec_conclusion}.

\section{Problem Formulation} \label{sec_background}

\subsection{Notation}

Throughout the paper, we use $n$ and $d$ to denote the sample size and the number of variables (or input dimension), respectively. 
We write $a_n \gtrsim b_n$, $b_n \lesssim a_n$, or $b_n=O(a_n)$, if $|b_n / a_n| < c$ for some constant $c$ for all sufficiently large $n$.  
Let $\mathcal{N}(\bm \mu, \bm \Sigma)$ denote Gaussian distribution with mean $\bm \mu$ and covariance $\bm \Sigma$.
Let $\|\cdot\|_1$ and $\|\cdot\|_2$ denote the common $\Lone$ and $\Ltwo$ vector norms, respectively. 
For any vector $\vz \in \mathbb{R}^d$ and set $\mathbb{X}$, we define $\|\vz\|_{\mathbb{X}} \de \sup_{\vx \in \mathbb{X}} |\vx^{\top}\vz|$, which may or may not be infinity.
Assume $\mathbb{X} = \{\vx : \|\vx\|_{\infty} \le M\}$ for some constant $M$ throughout the paper, and $\E$ denotes the expectation for the underlying data generating distribution. 
The notation for neural network parameters is summarized in Subsection~\ref{subsec_NNnotation}.

\subsection{Data generating process}

Suppose we have $n$ observations $\{(\vx_i, y_i)\}_{i=1,\ldots,n}$, where $y_i$'s are continuously-valued responses or labels. 
We assume that the data are independently generated from the underlying data generating process
$$
y_i = f_*(\vx_i) + \varepsilon_i 
$$
where $f_*(\cdot)$ is an unknown function. We will use a neural network to approximate the functionality of $f_*(\cdot)$.
We also assume that $\vx_i$'s $ \in \mathbb{X}$ are IID variables, where $\mathbb{X} \subset \mathbb{R}^d$ is a bounded set that contains zero, and $\varepsilon_i$'s are IID noises independent with $\vx_i$.
Moreover, we assume the following conditions. It is satisfied if $\varepsilon_i$ follows a sub-Gaussian or sub-exponential distribution.

\begin{assumption}\label{ass_data}
The noise terms satisfy $\E (\varepsilon_i^2 )  \le \tau^2$ and $\E (\max_{1 \leq i\leq n} |\varepsilon_i| ) \lesssim \tau\log n$.
\end{assumption}


\subsection{Deep neural network model class} \label{subsec_NNnotation}

Recall that our goal is to learn a regression function $\hat{f}_n: \vx \mapsto \hat{f}_n(\vx)$ for prediction. 
In practice, $\hat{f}_n$ is estimated from a pre-specified regression model class, such as linear regression and nonparametric regression based on series expansion with polynomials, splines, or wavelets bases. 
We consider the class of multi-layer feedforward neural networks (also illustrated in Figure~\ref{fig_diagram}), denoted by $\mathcal{F}_L$. 

\begin{figure}[tb]
\begin{center}
\centerline{\includegraphics[width=0.55\columnwidth]{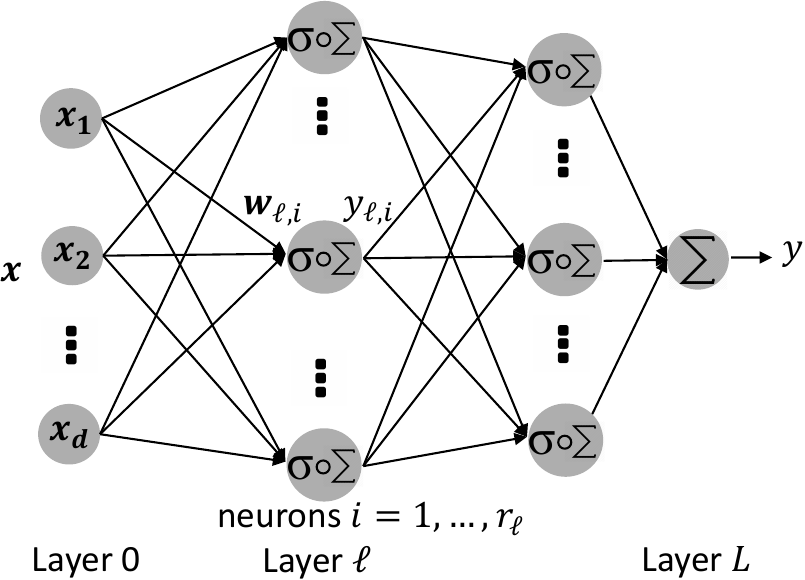}}
\caption{A graph showing multi-layer feedforward neural networks.}
\label{fig_diagram}
\end{center}
\vskip -0.3in
\end{figure}

In addition to the input layer, the neural network consists of $L$-layers, indexed by $l=1,\ldots,L$, each containing $r_l$ neurons/nodes.
Specially, the $L$-th layer is the the output layer, which outputs the regression result $y$ (so $r_L = 1$). Also, we may treat the input layer as layer $l=0$ (so $r_0 = d$). 
Let $\bm y_l \in \mathbb{R}^{r_l}$ denote the output of $l$-th layer, which is also the input of $l+1$-th layer, for $\ell=0,\ldots,L-1$.
In the $\ell$th layer ($\ell=1,\ldots,L-1$), the $i$th neuron takes a linear combination of the previous layer's output, with weights (or coefficients) $\bm w_{l, i} \in \mathbb{R}^{r_{l-1}}$, and passes it through a nonlinear activation function $\sigma(\cdot)$. 
In our analysis, $r_1,\ldots,r_{L-1}$ are allowed to be arbitrarily large, meaning that the network may have arbitrarily many hidden neurons.  

Suppose that the input data dimension is $\vx \in \mathbb{X} \subset \mathbb{R}^d$.
By our notation, we have $\bm y_0 = \vx$ (output of the $0$th layer), $\bm y_{l, i} = \sigma(\bm w_{l, i}^{\top}\bm y_{l-1})$ (output of the $\ell$th layer, $1 \le \ell \leq L-1$, and $y = \bm y_L = \bm w_L^{\top}\bm y_{L-1}$ (the final output).
Without loss of generality, we do not separately consider the bias term in each neural network layer, because it can be absorbed into the neural weight. Specifically, we note that $\bm y_{l, i} = \sigma(\bm w_{l, i}^{\top}\bm y_{l-1} + b_{l, i})$ can be rewritten as $\bm y_{l, i} = \sigma(\tilde{w}_{l, i}^{\top}\tilde{y}_{l-1})$, where $\tilde{w}_{l, i} = [\bm w_{l, i}, \frac{b_{l, i}}{\sigma(0)}]$ and $\bm \tilde{y}_{l-1} = [\bm y_{l-1}, \sigma(0)]$, Thus, we can simply realize $\sigma(0)$ by adding all-zero weights to the corresponding neuron, and the same proof will apply. 
With a slight abuse of notation, we used $\bm y_j$ for the output of the layer $j$, while $y_i$ for the $i$th regression response. 
We will use $w_{l, i, j}$ to denote the $j$th entry of the vector $\bm w_{l, i}$.

\subsection{Activation function} \label{subsec_activation}

We consider the following class of nonlinear activation function $\sigma(\cdot)$ for the technical analysis.
A specific case is the popular activation function $\sigma(x)=1/\{1+\exp(-x)\}$, also known as the logistic function.
We note that some other popular activation functions, e.g.,  the rectifier function, do not satisfy the assumption. 

\begin{assumption} \label{ass_activation}
	The activation function $\sigma(\cdot)$ is a bounded function on $\mathbb{R}$ satisfying $\sigma(z) \to 1$ as $z \to \infty$ and $\sigma(z) \to 0$ as $z \to -\infty$.
	Also, its derivatives satisfy $\sup_{z \in \mathbb{R}} |\sigma^{(k)}(z)| \le M$, for any positive integer $k$ and a constant $M$ (may depend on $k$).
\end{assumption}



\subsection{Variation-constrained networks} \label{subsec_variation}

Recall that $w_{l, i}$ denotes the neural weights for the $i$th neuron at the $\ell$th layer.
For any positive integer $L$ and real value $v$, let $\mathcal{F}_L(v)$ denote the set of all the $L$-layer feedforward neural networks (defined in Subsection~\ref{subsec_NNnotation}) satisfying $\|\bm w_{l, i}\|_1 \le v$ for all $i,\ell$. By the above notation, $\mathcal{F}_L$ may be written as $\mathcal{F}_L(\infty)$.
The existing result that any continuous function can be approximated arbitrarily well by a two-layer perceptron with sigmoid activation functions~\cite{cybenko1989approximations} indicates that the closure of $\{f: f \in \mathcal{F}_2(v), v \geq 0 \}$ contains all the continuous functions. It is thus a quite expressive model class. Though there does not seem to exist a theory on the size for $\mathcal{F}_L(v)$ with a general $L$, it is conceivable that its expressive power is not smaller than $\mathcal{F}_2(v')$ for some $v'$ that depends on $v$. As a result, any smooth function can be expressed by a neural network with sufficiently many layers and neurons.

From now on, we suppose that the number of layers $L$ is fixed. For any function $f_*$, we define its variation to~be
\begin{align}
V(f_*) := \inf\{v: f_* \in \mathcal{F}_L(v)\} + 1 \nonumber, 
\end{align}
Intuitively, $V(f_*)$ characterize the difficulty of learning $f_*$. The value of $V(f_*)$ is small for a smooth $f_*$ (e.g., a sigmoid function), and large for a bumpy $f_*$.
We note that $V(f_*)$ can be infinity even if $f_*$ is a continuous function, since $f_*$ may not be in the interior of the closure of $\{f: f \in \mathcal{F}_L(v), v \geq 0 \}$.
In the sequel, we mainly consider the case $V(f_*) < +\infty$,
and we will discuss the case $V(f_*) = +\infty$.

\subsection{Training and evaluation} \label{subsec_loss}

We learn the neural network from $n$ observations by solving the empirical risk minimizing problem
\begin{align}
	\hat{f}_n = \argmin_{f \in \mathcal{F}_L(V)} \frac{1}{n}  \sum_{i = 1}^n \l(f(\vx_i), y_i)  \label{eq_loss}
\end{align} 
for some loss function $\l(\cdot)$, where  $V$ is a chosen constant that constrains the variation of $\hat{f}_n$.
In practice, the above optimization can be operated by the alternative problem
\begin{align}
	\hat{f}_n = \argmin_{f \in \mathcal{F}} \biggl\{ \frac{1}{n}  \sum_{i = 1}^n \l(f(\vx_i), y_i) + \Omega(f) \biggr\} ,\label{eq_loss2}
\end{align} 
where $\mathcal{F}$ is the unconstrained class of neural networks
and $\Omega(\cdot)$ is an appropriately chosen functional. We will revisit the choice of $\Omega$ in later sections.

In the problem (\ref{eq_loss}), the loss function $\l(\cdot)$ is usually pre-determined to be the square $\Ltwo$ loss defined by $\Ltwo(y',y)^2 = (y'-y)^2$.
Correspondingly, the predictive performance of $\hat{f}_n$ is often evaluated by 
\begin{align}
\E ( \l(\hat{f}_n(\tilde{\bm x}, \tilde{y}) )
&= \E ( \l(\hat{f}_n(\tilde{\bm x}, \tilde{y}) )^2 \nonumber\\
&= \E ( \hat{f}_n(\tilde{\bm x}) - f_*(\tilde{\bm x}))^2 + \sigma^2  ,\label{eq_100}
\end{align}
where $(\tilde{\bm x}, \tilde{y})$ denotes a future observation independent with the data used to train $\hat{f}_n$. 
Removing the constant $\sigma^2$ and taking a square root for (\ref{eq_100}), we obtain the $\Ltwo$ risk introduced in (\ref{eq_l2risk}).
Another loss, the $\Lone$ loss defined by $\l(y',y) = |y'-y|$, is also used in practice. The training objective and the evaluation metric can be similarly defined.  
We will focus on the $\Ltwo$ loss in Section~\ref{sec_main}, and discuss the counterpart results for the $\Lone$ loss in Section~\ref{sec_L1}.

\section{Main Result} \label{sec_main}

%

Suppose that we use square $\Ltwo$ loss to train, namely
\begin{align}
	\hat{f}_n = \argmin_{f \in \mathcal{F}_L(V)} \frac{1}{n}  \sum_{i = 1}^n (f(\vx_i)- y_i)^2  . \label{eq_lossl2}
\end{align}

\begin{theorem} \label{thm:deep-nets}
Under Assumptions~\ref{ass_data} and \ref{ass_activation}, 
for an arbitrarily small constant $\delta > 0$ and any constant $V$ with $V \ge \max\{V(f_*),1\}$, the $\hat{f}_n$ in (\ref{eq_lossl2}) satisfies 
\begin{equation}
\R_2(\hat{f}_n)^2 = \E |\hat{f}_n(\bm x) - f_*(\bm x)|^2 \lesssim V^{2+\delta+(L-1)d} \ n^{-1 + 2\delta}. \nonumber
\end{equation}
\end{theorem}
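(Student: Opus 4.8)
The plan is to reduce the arbitrary-width class $\mathcal{F}_L(V)$ to a \emph{finite}-dimensional linear space by polynomial approximation, read off a parametric-type metric entropy from that reduction, and then run a realizability-driven empirical-process argument to extract the near-parametric rate. Realizability is available because $V \ge V(f_*)$ forces $f_* \in \mathcal{F}_L(V)$ (in the closure), so there is no genuine approximation error in $f_*$; the entire risk is estimation error.

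First I would write the basic inequality. Since $\hat f_n$ minimizes the empirical squared loss over $\mathcal{F}_L(V)$ and $f_* \in \mathcal{F}_L(V)$, substituting $y_i = f_*(\vx_i) + \varepsilon_i$ and rearranging gives
\[ \frac1n \sum_{i=1}^n (\hat f_n(\vx_i) - f_*(\vx_i))^2 \le \frac2n \sum_{i=1}^n \varepsilon_i\,(\hat f_n(\vx_i) - f_*(\vx_i)). \]
Thus controlling the excess risk reduces to bounding the multiplier empirical process $\sup_{f \in \mathcal{F}_L(V)} \tfrac1n \sum_i \varepsilon_i (f(\vx_i) - f_*(\vx_i))$ relative to the empirical norm $\|f - f_*\|_n$, where $\|g\|_n^2 \de \tfrac1n\sum_i g(\vx_i)^2$.

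The key step, and the main obstacle, is bounding the metric entropy of $\mathcal{F}_L(V)$ \emph{independently of the widths} $r_1,\dots,r_{L-1}$, which may be arbitrarily large. Here I would crucially use Assumption~\ref{ass_activation}: the boundedness of all derivatives of $\sigma$ (exactly the property that excludes ReLU) yields an accurate polynomial approximation of $\sigma$ on any compact interval. Since $\|\vx\|_\infty \le M$ and $\|\vw_{l,i}\|_1 \le V$, every pre-activation lies in an interval of length $O(V)$ (resp.\ $O(VM)$ at the first layer), so on that interval $\sigma$ is within $\eta$ of a polynomial of degree $D \lesssim V\log(1/\eta)$. Approximating $\sigma$ layer by layer and composing, any $f \in \mathcal{F}_L(V)$ is within $O(\eta)$ in sup-norm (with a benign $L$- and $M$-dependent constant from compounding errors across layers) of a polynomial in $\vx$ of degree $D^{L-1}$; the $\ell_1$ constraint is precisely what keeps each linear combination of arbitrarily many neurons inside this fixed polynomial space. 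The dimension of degree-$D^{L-1}$ polynomials in $d$ variables is $P \lesssim D^{(L-1)d} \lesssim (V\log(1/\eta))^{(L-1)d}$, which is finite and width-free, so $\log N(\epsilon,\mathcal{F}_L(V),\|\cdot\|_\infty) \lesssim P\log(V/\epsilon)$ for $\epsilon \gtrsim \eta$ — parametric-type entropy, logarithmic in $1/\epsilon$, with effective dimension $P \sim V^{(L-1)d}$.

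With this entropy bound the remainder is standard fast-rate machinery. Using that outputs lie in $[-V,V]$, the noise control of Assumption~\ref{ass_data}, and a chaining/peeling bound on the multiplier process, the right-hand side of the basic inequality is $\lesssim \sqrt{P/n}\,\|\hat f_n - f_*\|_n + (\text{lower order})$; solving the resulting fixed-point inequality gives $\|\hat f_n - f_*\|_n^2 \lesssim V^2 P\log(n)/n$, and a final uniform-concentration step transfers this to the population risk $\R_2(\hat f_n)^2$ at the same order. Taking $\eta$ polynomially small in $n$ makes $D \sim V\log n$ and $P \lesssim V^{(L-1)d}(\log n)^{(L-1)d}$, whence $\R_2(\hat f_n)^2 \lesssim V^{2+(L-1)d}(\log n)^{(L-1)d+1}\,n^{-1}$. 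The two powers of $\delta$ appear because logarithmic factors enter both through the polynomial degree (hence through $P$) and through the chaining bound; absorbing all polylogarithmic factors into $n^{\delta}$ and using $V \ge 1$ to write the harmless slack $V^{(L-1)d} \le V^{\delta+(L-1)d}$ yields the stated $V^{2+\delta+(L-1)d}\,n^{-1+2\delta}$. The delicate points I would watch are the width-free polynomial reduction and the control of compounding approximation error through the $L$ layers; everything downstream is routine.
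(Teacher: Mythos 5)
Your overall architecture---realizability plus a width-free entropy bound for $\mathcal{F}_L(V)$ extracted from the smoothness of $\sigma$, followed by standard localization---is sound and genuinely different from the paper's route, but the pivotal entropy step has a gap. You claim that $\sigma$ is within $\eta$ of a polynomial of degree $D \lesssim V\log(1/\eta)$ on an interval of length $O(V)$, hence that $\mathcal{F}_L(V)$ sits, up to $O(\eta)$ in sup-norm, inside a linear space of dimension $P \lesssim (V\log(1/\eta))^{(L-1)d}$ with parametric-type entropy $\log N(\epsilon) \lesssim P\log(V/\epsilon)$. Geometric decay of the best polynomial approximation error on an interval is equivalent, by Bernstein's theorem, to analyticity of $\sigma$ in an ellipse containing that interval; Assumption~\ref{ass_activation} only gives $\sup_z|\sigma^{(k)}(z)| \le M_k$ with $M_k$ allowed to depend arbitrarily on $k$, which admits non-analytic $C^\infty$ activations (e.g.\ Gevrey-class functions with $M_k \sim (k!)^2$) for which $D$ must grow polynomially, not logarithmically, in $1/\eta$. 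So, as written, the ``effective dimension'' $P$ is not polylogarithmic in $n$ under the stated hypotheses. The step is repairable: Jackson's theorem at a fixed large smoothness order $k$ gives $D \lesssim V\eta^{-1/k}$, whence $P \lesssim V^{(L-1)d}\eta^{-(L-1)d/k}$ and, after choosing $k \gtrsim (L-1)d/\delta$, an entropy of the form $V^{(L-1)d+\delta}\epsilon^{-\delta}$---but at that point you have lost the parametric entropy and arrived at exactly the bound the paper proves in Lemma~\ref{lem:eps-net-number}.

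For comparison, the paper reaches that entropy bound more directly: it shows by induction over layers that every $f \in \mathcal{F}_L(V)$ has all partial derivatives up to order $\alpha d$ (with $\alpha = 1/\delta$) bounded by $O(V^{1+\alpha d(L-1)})$, so $\mathcal{F}_L(V)$ embeds into a classical smoothness class whose metric entropy is $\nu V^{(L-1)d+\delta}\epsilon^{-\delta}$ (Lemma~\ref{lem:entropy}); it then runs an $\epsilon$-net decomposition of the excess risk with Bernstein's inequality and a union bound over the net, rather than chaining and peeling, and takes $\epsilon = \rho = 1/n$. Your multiplier-process formulation and fixed-point localization would work equally well once the entropy bound is corrected, and your observation that the pre-activations live in intervals of length $O(VM)$ is the same boundedness the paper exploits. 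The remaining loose ends in your sketch---handling noise that is only assumed to satisfy $\E\max_{1\le i\le n}|\varepsilon_i| \lesssim \tau\log n$ rather than being bounded, and transferring the bound from the empirical norm to the population risk---are routine and are handled in the paper by conditioning on the event where the uniform bound holds and using $\P(E_n^c)\le \rho$ together with $|f|\le V$.
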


\begin{remark}[Rate as a function of $n$]
The above result shows that the $\Ltwo$ risk $\R_2(\hat{f}_n)$ for the variation-constrained deep neural network class can achieve nearly parametric convergence rates  $n^{-1/2+\delta}$. 
The result implies that
\textit{for multi-layer neural networks with fixed variation and input dimension, the statistical risk decays with the sample size in a way similar to parametric models.} The implication is appealing in practice when we have a massive amount of data for learning an underlying function that is not so bumpy.

Though our convergence rate is tight only for fixed $d$ and $L$, its dependence $n$ is highly nontrivial to derive, even for a small input dimension such as $d = 2$ or $d=3$.
In terms of the dependence on $n$, our rate of convergence is much tighter than the existing results of $O(n^{-1/4})$ in various settings (see Section~\ref{sec_intro}).
In fact, we will show in Proposition~\ref{prop:minimax} that the $n^{-1/2}$ is the best rate one could expect for neural networks.
We will relax the dependence on $d$ in Section~\ref{sec_L1}.

Another interesting implication of Theorem~\ref{thm:deep-nets} is that the predictive performance of a class of neural networks does not necessarily depend on their `nominal complexity' as described by the number of neurons and layers. Instead, it only depends on the variation of the underlying function and $n$. The result indicates that deep neural networks may not suffer from overfitting when the network model is excessively complex, a striking observation made in many application studies. 

Figure~\ref{fig_result} shows an experiment result that initially motivated our study. 
We generated $n$ samples from $y=\beta_1x_1 + \cdots+\beta_5 x_5+\varepsilon$, where $x_1,\ldots,x_5,\varepsilon$ followed the IID standard Gaussian, and $n=2^5,2^6,\ldots, 2^{11}$. 
We used a $[5, 50, 10, 1]$-neural network, which consists of an input layer, two hidden layers with $50$ and $10$ neurons, and an output layer, and implemented the training using \textit{Pytorch}~\cite{ketkar2017introduction}. We recorded the squared $\Ltwo$ risk (also named the mean squared error) from 50 independent replications, using an independent test data with a size of $10^4$. 
To maintain a fixed variation, we approximately control the total $\ell_1$ norm of the parameters (around $250$) by applying $\ell_1$-regularizations to the neural weights during the training.  
\end{remark}

\begin{figure}[tb]
\begin{center}
\centerline{\includegraphics[width=0.55\columnwidth]{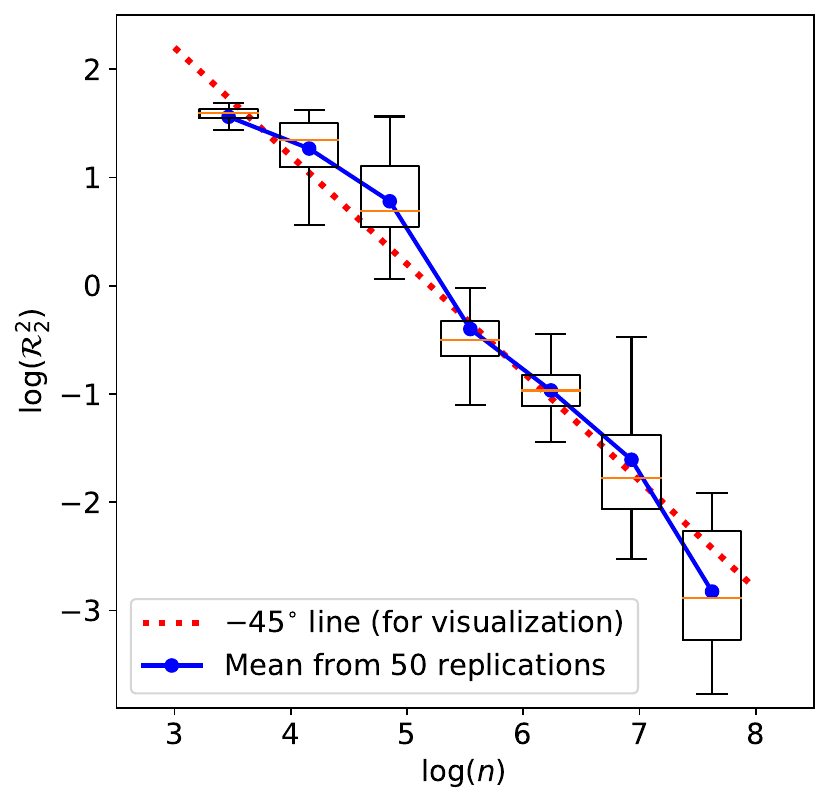}}
\caption{Experimental results showing the squared $\Ltwo$ risk of a class of three-layer feedforward neural networks with a fixed variation. The boxplots and mean values were calculated from 50 independent replications.}
\label{fig_result}
\end{center}
\vskip -0.3in
\end{figure}

\begin{remark}[No `free lunch']

The statistical risk of deep neural networks, as a nonparametric regression method, depends on many factors, including the sample size, the input dimension, and the smoothness of the underlying function.
Though Theorem~\ref{thm:deep-nets} gives a rate of convergence much faster than the typical nonparametric rates, a result not previously recognized even for a fixed $d$, it is derived by assuming a bounded variation $V$.
Without variation constraint, the rate could still be much slower than the parametric rate. Thus, Theorem~\ref{thm:deep-nets} does not conflict with the usual impression that neural networks may sacrifice the convergence rate for expressive power.

Specifically, recall that the set of continuous functions can be stratified by sets in the form of $\mathcal{F}_L(V)$ ($V\geq 0$), and that $V \ge V(f_*)$ implies that $f_* \in \mathcal{F}_L(V)$.
Theorem~\ref{thm:deep-nets} implies that only if $f_*$ can be expressed by a $V$-variation neural network, the predictive performance of the learned $\hat{f}_n$ will improve in a similar manner as a parametric model as $n$ becomes large.
The result does not hold if $f_*$ has a large (or infinite) variation.
This will be further discussed in the next remark.
\end{remark}

\begin{remark}[A new perspective of bias-variance tradeoffs in nonparametric regression]

The tight statistical risk rate (regarding $n$) is at the cost of restricting the value of variation. 
Recall that the variation of the underlying data-generating function means the smallest norm of neural weights needed to construct it. 
A general function $f_*$ may require a large or even infinite variation.
Though a large variation supports more regression functions with desirable approximation errors, it will enlarge the estimation variance and thus degrade the overall predictive performance. 
This naturally motivates the bias-variance tradeoff from a variation perspective. 

We define $T=V^{L}$, which represents the total variation of the network.
We introduce
\begin{align}
	\beta(f_*) = \lim\inf_{V \to \infty} \frac{-\log \min_{f \in \mathcal{F}_L(V)} \E\|f - f_*\|^2}{\log T} , \nonumber
\end{align}
which is well defined even for $V(f_*) = +\infty$.
An interpretation of $\beta(f_*)$ is that $f_*$ can be approximated by a neural network in $\mathcal{F}_L(V)$ with mean squared error no larger than $T^{-2\beta}$, for any constant $\beta < \beta(f_*)$.
Then, Theorem~\ref{thm:deep-nets} implies that
\begin{equation}
\E |\hat{f}_n(\bm x) - f_*(\bm x)|^2 \lesssim V^{2+\delta}(T/V)^{d}n^{-1 + 2\delta} + T^{-2\beta}
\end{equation}
for any positive constant $\delta$.
It can be easily verified that if $\log n \lesssim V^{d-2}$, we have
\begin{align}
\R_2(\hat{f}_n) = \sqrt{\E |\hat{f}_n(\bm x) - f_*(\bm x)|^2} = O(n^{-\frac{\beta}{d+2\beta}}),
\label{eq_101}
\end{align}
for any positive constant $\beta < \beta(f_*)$.
The risk bound in (\ref{eq_101}) is a consequence of bias-variance tradeoffs regarding the choice of $T$ (and thus $V$).
We note that (\ref{eq_101}) resembles the classical result of $O(n^{-\alpha/(d+2\alpha)})$ for the $\Ltwo$ risk of smoothness function classes (Subsection~\ref{subsec_nonpar}). 
A comprehensive analysis of the above tradeoffs for different neural structures is beyond the scope of this paper and is left as future work.

%

\end{remark}


\begin{proof}
The proof relies on an $\epsilon$-net argument.
The $L_{\infty}(\mathbb{X})$ metric of $\mathcal{F}(V)$ refers to:
\begin{align}
\|f_1-f_2\|_{L_{\infty}(\mathbb{X})} = \max_{\bm x \in \mathbb{X}}|f_1(\bm x) - f_2(\bm x)|.
\end{align}
We use $\mathcal{N}_{\epsilon}$ to denote an $\epsilon$-covering of $\mathcal{F}(V)$ under the above $L_{\infty}(\mathbb{X})$ metric. 
The proof will be based on the following two key observations.
\begin{itemize}
\item First, the $\epsilon$-covering is dense enough.
\begin{lemma}[Proved in the Appendix \ref{sec:proof-lem-eps-net-error}] \label{lem:eps-net-error}
If 
$ 
\|f-\hat{f}_n\|_{L_{\infty}(\mathbb{X})} \le \epsilon, 
$ 
then
\begin{subequations}
\begin{align}
\E (\hat{f}_n(\vx) - f_*(\vx))^2 - \E (f(\vx) - f_*(\vx))^2 \le&~4V\epsilon,\nonumber \\
\frac1n \sum_{i = 1}^n \biggl\{(f(\vx_i) - f_*(\vx_i) - \varepsilon_i)^2 - (\hat{f}_n(\vx_i) - f_*(\vx_i) - \varepsilon_i)^2\biggr\} \le&~4(V+\max_i |\varepsilon_i|)\epsilon.\nonumber
\end{align}
\end{subequations}
\end{lemma}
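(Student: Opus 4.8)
The plan is to prove both inequalities by the same elementary device: a difference-of-squares factorization $a^2-b^2=(a-b)(a+b)$ combined with a uniform sup-norm bound on every function in sight. The one genuinely structural ingredient I would establish first is that the constrained class is uniformly bounded, namely $\|f\|_{L_\infty(\mathbb{X})} \le V$ for all $f \in \mathcal{F}_L(V)$. Under Assumption~\ref{ass_activation} the hidden-layer outputs are values of $\sigma(\cdot)$ and hence lie in $[0,1]$ (as for the logistic function), so $\|\bm y_{L-1}\|_\infty \le 1$, while the output layer is the linear map $\bm w_L^\top \bm y_{L-1}$ with $\|\bm w_L\|_1 \le V$; therefore $|f(\vx)| \le \|\bm w_L\|_1\, \|\bm y_{L-1}\|_\infty \le V$ for every $\vx \in \mathbb{X}$. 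Since $V \ge V(f_*)$, the target $f_*$ lies in (the closure of) $\mathcal{F}_L(V)$ and inherits the same bound, and both $\hat{f}_n$ and the covering net $f$ are in $\mathcal{F}_L(V)$, so all three functions are bounded by $V$ on $\mathbb{X}$.

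For the first inequality I would set $a = \hat{f}_n - f_*$ and $b = f - f_*$ and factor $a^2 - b^2 = (a-b)(a+b)$. Here $a - b = \hat{f}_n - f$, whose absolute value is at most $\epsilon$ everywhere on $\mathbb{X}$ by hypothesis, while $|a+b| = |\hat{f}_n + f - 2f_*| \le |\hat{f}_n| + |f| + 2|f_*| \le 4V$ by the uniform bound. Since $a^2 - b^2 \le |a-b|\,|a+b| \le 4V\epsilon$ pointwise, taking the expectation over $\vx$ yields $\E(\hat{f}_n(\vx)-f_*(\vx))^2 - \E(f(\vx)-f_*(\vx))^2 \le 4V\epsilon$.

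For the second inequality I would run the identical argument pointwise at each sample $\vx_i$, but now in residual form with $a_i = f(\vx_i) - f_*(\vx_i) - \varepsilon_i$ and $b_i = \hat{f}_n(\vx_i) - f_*(\vx_i) - \varepsilon_i$. The noise cancels in the difference, $a_i - b_i = f(\vx_i) - \hat{f}_n(\vx_i)$ with $|a_i - b_i| \le \epsilon$, whereas the sum retains it, $|a_i + b_i| \le |f| + |\hat{f}_n| + 2|f_*| + 2|\varepsilon_i| \le 4V + 2\max_i|\varepsilon_i| \le 4(V + \max_i|\varepsilon_i|)$. Averaging $a_i^2 - b_i^2 = (a_i-b_i)(a_i+b_i)$ over $i = 1,\dots,n$ then gives the stated bound $4(V + \max_i|\varepsilon_i|)\epsilon$.

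I expect the only delicate point, rather than a real obstacle, to be the uniform boundedness step and its constants: one must check that Assumption~\ref{ass_activation} indeed confines the activations to $[0,1]$, so that $\|\bm y_{L-1}\|_\infty \le 1$ propagates the $\ell_1$ constraint on $\bm w_L$ into an $L_\infty$ bound on the network, and one must handle the mild closure issue in the definition $V(f_*) := \inf\{v : f_* \in \mathcal{F}_L(v)\}+1$ so that $f_*$ genuinely inherits the bound $V$. Everything after that is the triangle inequality and the factorization $a^2-b^2=(a-b)(a+b)$; the slack in replacing $2\max_i|\varepsilon_i|$ by $4\max_i|\varepsilon_i|$ is harmless and merely keeps the constant clean.
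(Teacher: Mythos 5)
Your proof is correct and follows essentially the same route as the paper's: both rest on the factorization $a^2-b^2=(a-b)(a+b)$ applied to $a=\hat{f}_n-f_*$, $b=f-f_*$ (and the residual analogue for the empirical sum), the uniform bound $|f|\le V$ on $\mathcal{F}_L(V)$, and the hypothesis $\|f-\hat{f}_n\|_{L_{\infty}(\mathbb{X})}\le\epsilon$. Your explicit verification of the sup-norm bound and of the closure issue for $f_*$ is if anything more careful than the paper, which invokes these facts implicitly.
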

\item Second, there exists a finite $\epsilon$-covering.
\begin{lemma}[Proved in the Appendix \ref{sec:proof-lem-eps-net-number}] \label{lem:eps-net-number}
The covering number of $\mathcal{F}(V)$ satisfies
\begin{equation} 
\log N(\epsilon) \le \nu V^{d(L-1)+\delta}\epsilon^{-\delta}, \label{eq:metric_L}
\end{equation}
where $\nu$ is a constant that only depends on $d$ and $\delta$. 
\end{lemma}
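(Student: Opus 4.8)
The plan is to prove the covering-number bound by replacing every activation in the network with a low-degree polynomial, thereby embedding an $\epsilon$-approximation of $\mathcal{F}_L(V)$ into a finite-dimensional space of polynomials in $\vx$, and then covering that space. The structural fact I would exploit is that the constraint $\|\bm w_{l,i}\|_1 \le v$ together with the boundedness of $\sigma$ forces each pre-activation argument into a controlled range: on the first hidden layer $|\bm w_{1,i}^{\top}\vx| \le vM$ since $\|\vx\|_{\infty} \le M$, while on every deeper layer the inputs are outputs of $\sigma$, hence bounded, so $|\bm w_{l,i}^{\top}\bm y_{l-1}| \le v$ for $\ell \ge 2$. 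This asymmetry between the first layer (range $\asymp vM$) and the deeper layers (range $\asymp v$) is what ultimately produces the exponent $d(L-1)$ rather than something larger.

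First I would establish an approximation-theoretic step: on an interval $[-R,R]$, the activation $\sigma$ admits a polynomial of degree $p \lesssim R\log(1/\eta)$ with uniform error at most $\eta$. This is where Assumption~\ref{ass_activation} is essential: the uniform bounds on all derivatives of $\sigma$, equivalently its analytic continuation to a strip around $\mathbb{R}$ (as holds for the logistic function, but not for the non-smooth rectifier excluded by the assumption), yield geometric decay of the best polynomial approximation via a Bernstein-ellipse / Chebyshev argument. I would work in the Chebyshev basis so that the coefficients stay bounded by the sup-norm of the approximant, a property I will need at the counting stage.

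Next I would propagate these approximations through the network and track the polynomial degree. Replacing the activation at hidden layer $\ell$ by a polynomial of degree $p_\ell$ turns each neuron output into a polynomial in $\vx$, and since composing the layer-$\ell$ activation polynomial with the layer-$(\ell-1)$ output \emph{multiplies} degrees, the final network becomes a polynomial of degree $D \lesssim (vM)\,v^{L-2}\,(\log(1/\eta))^{L-1}$ in $d$ variables. To keep the cumulative error below $\epsilon$, I would set the per-layer tolerance $\eta \asymp \epsilon\, v^{-L}$, using that $\sigma$ is Lipschitz and the $\ell_1$-weights amplify a per-layer perturbation by a factor proportional to $v$, so errors compound by at most $v^{L}$. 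The number of tensor-Chebyshev coefficients is then $m \lesssim D^{d} \lesssim v^{d(L-1)}(\log(1/\epsilon))^{d(L-1)}$ up to constants depending on $d,L,M$. Since each approximant is a bounded polynomial with bounded Chebyshev coefficients, an $\epsilon$-net in the coefficients with grid spacing $\asymp \epsilon/m$ induces an $\epsilon$-net of the functions in $L_{\infty}(\mathbb{X})$, giving $\log N(\epsilon) \lesssim m\log(1/\epsilon) \lesssim v^{d(L-1)}(\log(1/\epsilon))^{d(L-1)+1}$. The proof concludes by absorbing the polylogarithmic factor $(\log(1/\epsilon))^{d(L-1)+1}$ into $\epsilon^{-\delta}$ and the residual $\log v$ and lower-order $v$ factors into an extra $V^{\delta}$, and using $v \le V$, yielding $\log N(\epsilon) \le \nu V^{d(L-1)+\delta}\epsilon^{-\delta}$ with $\nu = \nu(d,\delta)$ (the dependence on the fixed $L$ and $M$ folded into $\nu$).

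I expect the main obstacle to be the degree-tracking step: pinning the exponent of $V$ at exactly $d(L-1)$ requires that the pre-activation range be $O(v)$ at every layer beyond the first (which relies crucially on $\sigma$ being bounded) and that the multiplicative degree growth under composition carry no hidden blow-up, while simultaneously choosing $\eta$ small enough to survive the $v^{L}$ error amplification yet large enough that $\log(1/\eta)$ contributes only polylogarithmically in $\epsilon$. A secondary subtlety is justifying the geometric approximation rate from Assumption~\ref{ass_activation} alone; this needs the derivative bounds $\sup_{z}|\sigma^{(k)}(z)|$ to grow no faster than $C^{k}k!$, i.e.\ genuine analyticity, which I would verify directly for the logistic activation.
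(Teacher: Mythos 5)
Your route---replacing each activation by a polynomial, tracking the degree through the composition, and covering the resulting finite-dimensional coefficient space---is genuinely different from the paper's proof, and it is structurally workable. But it has one genuine gap, precisely the one you flag at the end and then set aside: the degree bound $p \lesssim R\log(1/\eta)$ holds only when $\sigma$ is analytic in a strip, equivalently when $\sup_z|\sigma^{(k)}(z)| \le C^k k!$, and Assumption~\ref{ass_activation} does not supply this. The assumption only asserts $\sup_z|\sigma^{(k)}(z)| \le M$ with $M$ allowed to depend on $k$ in an arbitrary way; your parenthetical claim that these derivative bounds are ``equivalently'' an analytic continuation to a strip is incorrect. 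For instance, adding a small compactly supported $C^\infty$ bump to the logistic function produces an activation satisfying Assumption~\ref{ass_activation} (bounded, correct limits at $\pm\infty$, all derivatives bounded) that is not analytic, and for such $\sigma$ the Bernstein-ellipse/Chebyshev geometric rate fails: the best degree-$p$ uniform error on $[-R,R]$ decays sub-geometrically, so the factor $\log(1/\eta)$ in your degree count must be replaced by a power of $1/\eta$ whose exponent Assumption~\ref{ass_activation} gives you no control over. Your fallback---verify analyticity ``directly for the logistic activation''---proves the lemma only for that one $\sigma$, whereas Lemma~\ref{lem:eps-net-number}, and Theorem~\ref{thm:deep-nets} which relies on it, are stated for every activation satisfying Assumption~\ref{ass_activation}. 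As written, the proposal establishes a strictly weaker statement.

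The gap is fixable, and the fix highlights the real contrast with the paper's proof. The paper uses only finitely many derivatives: setting $\alpha = 1/\delta$, it shows by induction that $\|\nabla^k f\|_{\infty} \lesssim V^{1+k(L-1)}$ for all $k \le \alpha d$, concludes that $\mathcal{F}_L(V)$ sits inside a smoothness class of order $\alpha d$ with derivative bound $M = V^{\alpha d(L-1)+1}$, and invokes the classical metric entropy bound for such classes (its Lemma~\ref{lem:entropy}), giving $\log N(\epsilon) \le \nu M^{\delta}\epsilon^{-\delta} = \nu V^{d(L-1)+\delta}\epsilon^{-\delta}$, which is exactly \eqref{eq:metric_L}. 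You can perform the same finitization inside your polynomial framework: replace the Bernstein-ellipse step by Jackson's theorem at a single fixed order $r$, which gives uniform error $\lesssim M_r(R/p)^r$ for degree $p$ and uses only the derivative bounds up to order $r$ that Assumption~\ref{ass_activation} actually provides. Choosing $r \gtrsim dL(L-1)/\delta$, the per-layer degree is $p_\ell \lesssim R_\ell\,\eta^{-1/r}$, the composed degree is $D \lesssim v^{L-1}\eta^{-(L-1)/r}$, and with $\eta \asymp \epsilon v^{-L}$ your coefficient count $D^d$ and grid argument land at $\log N(\epsilon) \lesssim V^{d(L-1)+\delta}\epsilon^{-\delta}$ after absorbing logarithms, exactly as in your final step. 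With that one substitution your argument becomes a correct, self-contained alternative to the paper's: it trades the external bracketing-based entropy bound for smoothness classes for an elementary, explicit count of polynomial coefficients, at the cost of heavier bookkeeping for degree growth and error propagation through the layers.
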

\end{itemize}

With the above two lemmas, we first make the following decomposition.
\begin{align}
\E (\hat{f}_n(\bm x) - f_*(\bm x))^2 =&~\E (\hat{f}_n(\bm x) - f_*(\bm x))^2 - \E (f(\bm x) - f_*(\bm x))^2 \nonumber\\
&+ \E (f(\bm x) - f_*(\bm x))^2 -  \frac{1}{n} \sum_{i = 1}^n [(f(\bm x_i) - f_*(\bm x_i) - \varepsilon_i)^2 - \varepsilon_i^2] \nonumber\\
&+  \frac{1}{n} \sum_{i = 1}^n (f(\bm x_i) - f_*(\bm x_i) - \varepsilon_i)^2 -  \frac{1}{n} \sum_{i = 1}^n (\hat{f}_n(\bm x_i) - f_*(\bm x_i) - \varepsilon_i)^2 \nonumber\\
&+\frac{1}{n} \sum_{i = 1}^n [(\hat{f}_n(\bm x_i) - f_*(\bm x_i) - \varepsilon_i)^2 - \varepsilon_i^2],\label{eq_103}
\end{align}
where $f \in \mathcal{N}_{\epsilon}$ satisfies
\begin{align}
\|f-\hat{f}_n\|_{L_2(\mu)} \le \epsilon.
\end{align}

Next, we bound the four terms in (\ref{eq_103}) separately.
\begin{itemize}
\item According to Lemma \ref{lem:eps-net-error}, the first term can be bounded by $4V\epsilon$.

\item The second term is bounded by applying Bernstein's inequality and the union bound on $\mathcal{N}_{\epsilon}$, which is formalized by the following lemma.

\begin{lemma}[Proved in the Appendix \ref{sec:proof-lem-gen-error}] \label{lem:gen-error}
For any given $\rho>0$
\begin{align} 
&\E (f(\vx) - f_*(\vx))^2 - \frac{1}{n} \sum_{i = 1}^n [(f(\bm x_i) - f_*(\bm x_i) - \varepsilon_i)^2 - \varepsilon_i^2] \nonumber\\
&\lesssim \frac{V(V+\max_i |\varepsilon_i|)}{n}\log \frac{N(\epsilon)}{\rho}. \label{eq:risk_L2_net}
\end{align}
holds uniformly for all $f \in \mathcal{N}_{\epsilon}$ with probability at least $1 - \rho$.
\end{lemma}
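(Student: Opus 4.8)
The plan is to recognize the left-hand side as the deviation of an empirical mean from its expectation for each fixed $f$, control it by Bernstein's inequality, and then make the control uniform over the finite net $\mathcal{N}_{\epsilon}$ by a union bound. Fixing $f \in \mathcal{N}_{\epsilon}$ and writing $g_i = f(\vx_i) - f_*(\vx_i)$, the summand expands as
\begin{align}
Z_i \de (f(\vx_i) - f_*(\vx_i) - \varepsilon_i)^2 - \varepsilon_i^2 = g_i^2 - 2 g_i \varepsilon_i . \nonumber
\end{align}
Because $\vx_i$ and $\varepsilon_i$ are independent and $\varepsilon_i$ has zero mean, the cross term vanishes in expectation, so $\E Z_i = \E (f(\vx) - f_*(\vx))^2$ and the quantity to be bounded is precisely $\E Z_i - \frac{1}{n}\sum_{i=1}^n Z_i$.

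First I would split $Z_i$ into its bounded, noise-free part $g_i^2 \in [0, 4V^2]$ and its mean-zero part $-2 g_i \varepsilon_i$, and treat each by Bernstein separately; this sidesteps the fact that conditioning on the (unbounded) noise would shift the mean of $Z_i$. Two ingredients are needed. For boundedness, the bounded activation (Assumption~\ref{ass_activation}) together with the constraint $\|\bm{w}_{l,i}\|_1 \le V$ gives $|f(\vx)| \le V$, and likewise $|f_*(\vx)| \le V$ since $V \ge V(f_*)$; hence $|g_i| \le 2V$ and $|g_i \varepsilon_i| \le 2V \max_j |\varepsilon_j|$. For the variances, independence and $\E \varepsilon^2 \le \tau^2$ give $\var(g_i^2) \le \E g^4 \le 4V^2\, \E g^2$ and $\var(g_i \varepsilon_i) \le \tau^2\, \E g^2$; the essential feature is that both are controlled by $\E g^2 = \E (f - f_*)^2$, the very quantity being estimated.

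Applying Bernstein's inequality to the two parts and adding them, I obtain, for the fixed $f$ and with probability at least $1 - \rho/N(\epsilon)$, a variance term of order $\sqrt{(V^2 + \tau^2)\, \E g^2\, \log(N(\epsilon)/\rho)/n}$ together with a boundedness term of order $V(V + \max_i |\varepsilon_i|)\, \log(N(\epsilon)/\rho)/n$. A union bound over the $N(\epsilon)$ elements of $\mathcal{N}_{\epsilon}$ then makes the bound hold uniformly with probability at least $1 - \rho$. Finally I would apply Young's inequality to the variance term, $\sqrt{(V^2 + \tau^2)\, \E g^2\, c} \le \tfrac{1}{2} \E g^2 + \tfrac{1}{2}(V^2 + \tau^2)\, c$ with $c = \log(N(\epsilon)/\rho)/n$, which trades the slow $n^{-1/2}$-type variance term for a boundedness-type $n^{-1}$ term at the price of a $\tfrac{1}{2}\E (f - f_*)^2$ remainder; this remainder is absorbed once the four terms of the risk decomposition are assembled, leaving the stated bound.

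The main obstacle is producing the fast $1/n$ rate rather than the $1/\sqrt{n}$ rate of earlier analyses: it rests entirely on the localized, self-bounding variance estimate $\var(Z_i) \lesssim (V^2 + \tau^2)\, \E (f - f_*)^2$ and on the Young-inequality absorption step, since a crude bound $\var(Z_i) \lesssim b^2$ would only recover the $n^{-1/4}$ risk. A secondary technical point is that $Z_i$ is not bounded because $\varepsilon_i$ is not; isolating the mean-zero term $-2 g_i \varepsilon_i$ and bounding its summands by $2V \max_i |\varepsilon_i|$ is what introduces the $\max_i |\varepsilon_i|$ factor, which is later tamed through Assumption~\ref{ass_data}'s control $\E \max_i |\varepsilon_i| \lesssim \tau \log n$.
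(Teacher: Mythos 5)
Your proposal is correct and follows essentially the same route as the paper: Bernstein's inequality applied to $\xi_i = (f(\vx_i)-f_*(\vx_i)-\varepsilon_i)^2 - \varepsilon_i^2$ with the localized variance bound $\var(\xi_i) \lesssim (V^2+\tau^2)\,\E(f-f_*)^2$ and the boundedness $|f|\le V$, followed by a union bound over $\mathcal{N}_\epsilon$ and absorption of the $\sqrt{\E(f-f_*)^2\cdot(\cdot)/n}$ term. The only (harmless) differences are that you apply Bernstein separately to the bounded part $g_i^2$ and the mean-zero part $-2g_i\varepsilon_i$ rather than to $\xi_i$ as a whole, and that you spell out the Young-inequality absorption and its deferred $\tfrac12\E(f-f_*)^2$ remainder, which the paper compresses into ``it can be verified that (\ref{eq_104}) implies (\ref{eq_event}).''
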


\item Lemma \ref{lem:eps-net-error} indicates that the third term can be bounded by $4(V+\max_i |\varepsilon_i|)\epsilon$.

\item As for the fourth term, according to the definition of $\hat{f}_n$, we have 
$$\frac1n \sum_{i = 1}^n (\hat{f}_n(\vx_i) - f_*(\vx_i) - \varepsilon_i)^2 \le \frac1n \sum_{i = 1}^n \varepsilon_i^2,$$
which implies that
\begin{align}
\frac{1}{n} \sum_{i = 1}^n [(\hat{f}_n(\bm x_i) - f_*(\bm x_i) - \varepsilon_i)^2 - \varepsilon_i^2] \le 0. \nonumber
\end{align}

\end{itemize}

Therefore, we have 
\begin{align}
\E (\hat{f}_n(\bm x) - f_*(\bm x))^2 
&\lesssim V\epsilon + \frac{V(V+\max_i |\varepsilon_i|)}{n}\log \frac{N(\epsilon)}{\rho} + (V+\max_i |\varepsilon_i|)\epsilon \label{eq_105}
\end{align}
uniformly for all $f \in \mathcal{N}_{\epsilon}$ with probability at least $1 - \rho$.

Let $E_n$ denote the event that (\ref{eq_105}) occurs for all $f \in \mathcal{N}_{\epsilon}$. Then $\P(E_n^c) \leq \rho$. 
Recall that $\E \max_i |\varepsilon_i| \lesssim \tau\log n$ (Assumption~\ref{ass_data}).
Taking $\rho = 1/n$ and invoking (\ref{eq_105}),  we have
and for all $f \in \mathcal{N}_{\epsilon}$,
\begin{align*} 
\E (\hat{f}_n(\bm x) - f_*(\bm x))^2 
&= 
\E\biggl\{ (\hat{f}_n(\bm x) - f_*(\bm x))^2  \i_{E_n} + (\hat{f}_n(\bm x) - f_*(\bm x))^2 \i_{E^c_n} \biggr\}  \\
& \lesssim \frac{V(V+\tau\log n)}{n}\log \frac{N(\epsilon)}{\rho} + (V+\tau\log n)\epsilon
+ V^2 \P(E^c_n). \\
&\lesssim \frac{V(V+\tau\log n)}{n}\log (nN(\epsilon)) + (V+\tau\log n)\epsilon.
\end{align*}
 
By using the upper bounds for $N(\epsilon)$ in \eqref{eq:metric_L} and letting $\epsilon = 1/n$, we obtain
\begin{equation}
\R_2(\hat{f}_n)^2 = \E |\hat{f}_n(\bm x) - f_*(\bm x)|^2 \lesssim V^{1+\delta+(L-1)d}(V+\tau\log n) n^{-1 + \delta}. \label{eq_107}
\end{equation}
The right-hand side in (\ref{eq_107}) is further upper bounded by $V^{2+\delta+(L-1)d} \ n^{-1 + 2\delta}$, which concludes the proof.

\end{proof}


At the end of this section, we show that the minimax risk lower bounds for the neural network class $\mathcal{F}_2(V)$ is at the order of $n^{-1/2}$.
The result implies that the upper bound in Theorem~\ref{thm:deep-nets} is rather tight. 
We need the following additional assumption.
\begin{assumption} \label{ass_activation2}
	There exists a constant $c > 0$ and a bounded subset $\mathcal{S} \subset \mathbb{R}$ such that $\mathbb{P}(Z \in \mathcal{S}) > c$ and $\inf_{z \in \mathcal{S}} \sigma'(z) > c$ for $Z \sim \mathcal{N}(0, 1)$. 
\end{assumption}

\begin{proposition}[Minimax risk lower bound]
\label{prop:minimax}
Suppose that Assumptions~\ref{ass_activation} and \ref{ass_activation2} hold,
and $\vx_1, \vx_2, \ldots, \vx_n \overset{iid}{\sim} \mathcal{N}(0, \vI_d)$ (with the probability measure denoted by $\mu$).
Then, for $q = 1, 2$,
\begin{equation}
\inf_{\hat{f}_n}\sup_{f \in \mathcal{F}_2(V)} \|\hat{f}_n - f\|_{L_q(\mu)} \gtrsim V\sqrt{\frac{d}{n}}.
\end{equation}
\end{proposition}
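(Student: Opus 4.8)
The plan is to prove the lower bound by reducing estimation to a multiple-hypothesis testing problem and applying Assouad's lemma over a carefully chosen finite sub-family of $\mathcal{F}_2(V)$ indexed by the vertices of a hypercube $\{-1,+1\}^d$. The role of Assumption~\ref{ass_activation2} is precisely to supply a quantitative, non-degenerate lower bound on how much the network output moves when a pre-activation is perturbed: on the event that the pre-activation lands in $\mathcal{S}$, which has probability at least $c$ under the Gaussian design, the derivative $\sigma'$ is bounded below by $c$, so a small change in an input weight produces a proportional and, crucially, non-vanishing change in the output. This is what lets $d$ coordinate-aligned perturbations behave like $d$ nearly independent signal directions and is the source of the $\sqrt{d}$ factor; mere non-constancy of $\sigma$ (Assumption~\ref{ass_activation}) would not suffice.

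Concretely, I would fix a shift $t_0$ and a scale $\alpha>0$ so that $\alpha x_j + t_0 \in \mathcal{S}$ with probability bounded below, and for each sign vector $\theta \in \{-1,+1\}^d$ define
\[
f_\theta(\vx) = \gamma \sum_{j=1}^{d} \theta_j\, \sigma(\alpha x_j + t_0),
\]
with $\gamma$ and $\alpha$ chosen so that each inner weight satisfies $\|\vw_{1,j}\|_1 = \alpha \le V$ and the output weight satisfies $\|\vw_2\|_1 = \gamma d \le V$, hence $f_\theta \in \mathcal{F}_2(V)$. Because the coordinates of $\vx \sim \mathcal{N}(0,\vI_d)$ are independent, flipping a single sign $\theta_j$ changes only the $j$th summand, and the centered variables $\sigma(\alpha x_j + t_0) - \E \sigma(\alpha x_j + t_0)$ are i.i.d.\ with variance $\gtrsim c^2\alpha^2$ by a first-order expansion of $\sigma$ together with Assumption~\ref{ass_activation2}. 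This gives the per-coordinate separation $\|f_\theta - f_{\theta'}\|_{L_2(\mu)}^2 \gtrsim \gamma^2\alpha^2\,\rho_H(\theta,\theta')$, where $\rho_H$ is the Hamming distance. Taking the noise to be $\mathcal{N}(0,\tau^2)$, which satisfies Assumption~\ref{ass_data}, the Kullback--Leibler divergence between the induced data laws of two neighboring hypotheses equals $\tfrac{n}{2\tau^2}\|f_\theta-f_{\theta'}\|_{L_2(\mu)}^2$, so choosing the scales to keep this divergence bounded renders every pairwise test non-trivial. Assouad's lemma then lower bounds the minimax $L_2(\mu)$ risk by $\sqrt{d}$ times the per-coordinate separation; balancing $\gamma$ and $\alpha$ against the budget constraint and the divergence constraint yields the claimed order $V\sqrt{d/n}$. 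The case $q=1$ follows from the same construction, since $\|\cdot\|_{L_1(\mu)}$ and $\|\cdot\|_{L_2(\mu)}$ are comparable on this finite, uniformly bounded family.

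I expect the main obstacle to be the simultaneous bookkeeping of three competing requirements: every $f_\theta$ must remain inside the $\ell_1$-ball defining $\mathcal{F}_2(V)$; the neighboring divergences must stay $O(1)$ so that the testing bound is effective; and the per-coordinate $L_q$ separation must be pushed as high as the variation budget allows so that, multiplied by the $d$ directions, it reaches the target scale. Since for Gaussian noise the separation and the divergence are both governed by the same $L_2(\mu)$ quantity, this balancing is delicate, and it is exactly here that the \emph{quantitative} derivative lower bound of Assumption~\ref{ass_activation2} is needed to keep the separation at the optimal order while the divergence stays controlled. A secondary technical point is verifying the variance (anti-concentration) lower bound for $\sigma(\alpha x_j + t_0)$ uniformly in the chosen scale, which reduces to a routine estimate once $t_0$ and the set $\mathcal{S}$ are fixed.
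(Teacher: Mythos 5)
Your route is genuinely different from the paper's. The paper restricts attention to the single-neuron subclass $\mathcal{F}_0=\{V\sigma(\vw^{\top}\vx):\|\vw\|_2=1\}$, uses Assumption~\ref{ass_activation2} to show that $\vw\mapsto V\sigma(\vw^{\top}\cdot)$ is bi-Lipschitz from the unit sphere into $L_1(\mu)$, and then feeds the packing entropy (in $L_q$) and the covering entropy (in square-root Kullback--Leibler) of $\mathcal{F}_0$ into the global entropy lower bound of Yang--Barron (Lemma~\ref{lem:minmax}). You instead build a hypercube family $f_\theta=\gamma\sum_{j}\theta_j\sigma(\alpha x_j+t_0)$ and invoke Assouad. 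Both are legitimate machines for a $\sqrt{d/n}$-type bound, and your use of Assumption~\ref{ass_activation2} to lower-bound the per-coordinate variance plays exactly the role of the paper's bi-Lipschitz step.

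The gap is in the final balancing: as described, your construction cannot produce the factor $V$ in the stated bound $V\sqrt{d/n}$. With Gaussian noise of variance $\tau^2$, the per-coordinate $L_2(\mu)$ separation and the Kullback--Leibler divergence between neighboring hypotheses are governed by the same quantity $\gamma\alpha$: the separation is of order $\gamma\alpha$ per flipped coordinate, and the neighboring divergence is of order $n\gamma^2\alpha^2/\tau^2$. Keeping the divergence $O(1)$ forces $\gamma\alpha\lesssim\tau/\sqrt{n}$, and Assouad then returns $\sqrt{d}\cdot\gamma\alpha\lesssim\tau\sqrt{d/n}$. The budget constraints $\gamma d\le V$ and $\alpha\le V$ are slack for large $n$ and never bind, so $V$ simply drops out of the answer; there is no choice of $\gamma,\alpha$ making your three requirements simultaneously tight at the level $V\sqrt{d/n}$. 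You in fact identify the obstruction yourself when you observe that the separation and the divergence are controlled by the same $L_2(\mu)$ quantity --- that is precisely why no rebalancing of $\gamma$ against $\alpha$ can manufacture an extra factor of $V$. To carry $V$ into the bound one must make every pairwise $L_q$ distance in the sub-family scale linearly in $V$ while keeping of order $e^{cd}$ well-separated hypotheses, which is what the paper's subclass $\{V\sigma(\vw^{\top}\vx)\}$ is designed to do (each distance is $\asymp V\|\vw_1-\vw_2\|_2$); your output weights $\gamma=V/d$ instead get absorbed by the divergence constraint. A secondary point: the $q=1$ case is not automatic from ``comparability on a bounded family,'' since $\|g\|_{L_1(\mu)}\le\|g\|_{L_2(\mu)}$ goes the wrong way; the reverse inequality for the differences $f_\theta-f_{\theta'}$ needs its own argument (e.g., Paley--Zygmund for sums of independent bounded centered terms, or the pointwise derivative bound of Assumption~\ref{ass_activation2} applied directly in $L_1$, as the paper does).
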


\begin{proof}
The proof is postponed to Appendix \ref{sec:proof-prop-minimax}.
\end{proof}

\section{Result under the $\Lone$ Risk and Improvement of the Dependence on $V$} \label{sec_L1}


Suppose that we use the $\Lone$ loss for training, namely
\begin{align}
	\hat{f}_n = \argmin_{f \in \mathcal{F}_L(V)} \frac{1}{n}  \sum_{i = 1}^n |f(\vx_i)-y_i|  . \label{eq_lossl1}
\end{align} 
Correspondingly, we evaluate the predictive performance of a learned regression function $f$ using the $\Lone$-statistical risk 
$$
\R_1(f) = \E |f(\vx)-y| - \E |\varepsilon|.
$$
It can be verified that $\R_1(f)$ is nonnegative for symmetric random variables $\varepsilon$.
We note that the $\Ltwo$ risk is equivalent to the squared $\Ltwo(\mu)$ metric plus some constant (see (\ref{eq_100}).
But minimizing the $\Lone$ risk does not necessarily minimize the $\Lone(\mu)$ metric. 
We are interested in the $\Lone$-based training and evaluation mainly because they are practically used in many learning problems, e.g., those for heterogeneous noises~\cite{Welsch1977}, ordinal data~\cite{pedregosa2017consistency}, and imaging data~\cite{zhao2016loss}.
From a practical point of view, using $\Lone$ loss for training is as easy and fast as using $\Ltwo$ in prevalent computational frameworks such as 
\textit{Tensorflow}~\cite{abadi2016tensorflow}, 
\textit{Pytorch}~\cite{ketkar2017introduction}, and
\textit{Keras}~\cite{gulli2017deep}.

From the theoretical perspective, interestingly, we found that the $\Lone$ loss provides  technical convenience for deriving tight risk bounds.
In particular, we will show that the bound in Theorem~\ref{thm:deep-nets} can be improved in terms of the dependence on the input dimension $d$.

\begin{theorem} \label{thm:l1out}
Under Assumption \ref{ass_activation}, 
for any constant $V$ with $V \ge V(f_*)$, the $\hat{f}_n$ in (\ref{eq_lossl1}) satisfies 
\begin{align}
\R_1(\hat{f}_n) \lesssim \frac{V^{L-1}\sqrt{d\log n}+\tau}{\sqrt{n}}. \label{eq:l1risk}
\end{align}
\end{theorem}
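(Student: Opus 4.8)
The plan is to pair the optimality of the empirical minimizer with a uniform deviation bound for the $\Lone$ loss, exploiting that this loss is $1$-Lipschitz. Since $V\ge V(f_*)$ gives $f_*\in\mathcal{F}_L(V)$, feeding $f_*$ into the objective in (\ref{eq_lossl1}) yields $\frac1n\sum_{i=1}^n|\hat f_n(\vx_i)-y_i|\le\frac1n\sum_{i=1}^n|\varepsilon_i|$. Writing $h_f(\vx,y):=|f(\vx)-y|-|\varepsilon|$, so that $\R_1(f)=\E h_f$ and $\frac1n\sum_i h_{f_*}(\vx_i,y_i)=0$, this optimality reads $\frac1n\sum_i h_{\hat f_n}(\vx_i,y_i)\le0$, and therefore
\begin{align}
\R_1(\hat f_n)\le\sup_{f\in\mathcal{F}_L(V)}\Bigl(\E h_f-\frac1n\sum_{i=1}^n h_f(\vx_i,y_i)\Bigr).\nonumber
\end{align}
The decisive structural fact is that, by the reverse triangle inequality, $|h_f|\le|f(\vx)-f_*(\vx)|\le2V$ \emph{uniformly over the noise}: the centered loss $h_f$ is bounded no matter how heavy-tailed $\varepsilon$ is, which is why the argument needs no tail assumption on the noise and escapes the $\epsilon^{-\delta}$ and $V^{(L-1)d}$ covering factors underlying Theorem~\ref{thm:deep-nets}.

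Second, I would bound the supremum by symmetrization followed by Talagrand's contraction lemma. Because $t\mapsto|t-y_i|-|\varepsilon_i|$ is $1$-Lipschitz and vanishes at $t=f_*(\vx_i)$, symmetrizing the empirical process and contracting out the loss reduce the task to the Rademacher complexity $\mathcal{R}_n(\mathcal{F}_L(V))=\E\sup_{f}\frac1n\sum_i\xi_i f(\vx_i)$ of the network class itself, where $\xi_i$ are Rademacher signs and the fixed shift $f_*$ does not affect the complexity. A matching high-probability statement follows from McDiarmid's inequality, again using $|h_f|\le2V$. This is precisely the step that fails for the squared loss and that the paper flags as the technical advantage of the $\Lone$ loss.

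Third, and most importantly, I would bound $\mathcal{R}_n(\mathcal{F}_L(V))$ by an \emph{asymmetric} layer-by-layer peeling. At the linear output layer $f=\bm w_L^\top\bm y_{L-1}$, the bound $\|\bm w_L\|_1\le V$ and $\ell_1/\ell_\infty$ duality extract a factor $V$ and a maximum over the hidden units; each hidden layer $l=2,\dots,L-1$ is then stripped using the contraction lemma for the Lipschitz activation $\sigma$ (Assumption~\ref{ass_activation}) together with $\|\bm w_{l,i}\|_1\le V$, contributing one more factor $V$ apiece. Because the hidden widths are arbitrary, these layers \emph{must} be handled by the dimension-free contraction, so each costs a full power of $V$; in total the output and hidden layers give $V^{L-1}$. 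The input layer is treated differently: the map $\vx\mapsto\sigma(\bm w^\top\vx)$ over $\|\bm w\|_1\le V$ is a class of uniformly bounded functions indexed by a $d$-dimensional ball, so its $L_\infty(\mathbb{X})$-covering number obeys $\log N(\epsilon)\lesssim d\log(V/\epsilon)$, and Dudley's chaining (cut off at scale $1/n$, the source of the $\log n$) gives a contribution of order $\sqrt{d\log n/n}$ with only \emph{logarithmic} dependence on $V$. Combining the $V^{L-1}$ from the upper layers with this base term, and adding the $\tau/\sqrt n$ produced by concentrating the irreducible noise average $\frac1n\sum_i|\varepsilon_i|$ around $\E|\varepsilon|$ (finite second moment), yields the stated bound.

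The main obstacle I anticipate is exactly this asymmetry in the third step. A uniform contraction applied to the first layer would cost an extra factor $V$ (giving $V^{L}$ rather than $V^{L-1}$) and would deliver only a crude $\sqrt{\log d}$, whereas a covering argument applied to a hidden layer would be fatal since the width is unbounded. Making the two regimes meet cleanly---peel the arbitrarily wide hidden layers by dimension-free contraction, then switch to a $d$-dimensional covering exactly at the input layer, where the boundedness of $\sigma$ lets one trade a polynomial factor of $V$ for a $\sqrt{d\log n}$ factor---is the crux, and it is what upgrades the $V^{(L-1)d}$ dependence of Theorem~\ref{thm:deep-nets} to the $V^{L-1}\sqrt d$ dependence claimed here.
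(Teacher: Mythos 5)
Your proposal follows essentially the same route as the paper's proof: ERM optimality against $f_*$, symmetrization plus Talagrand's contraction for the $1$-Lipschitz $\Lone$ loss, layer-by-layer peeling that extracts $V^{L-1}$ from the output and hidden layers, and a $d$-dimensional covering argument at the input layer yielding the $\sqrt{d\log n/n}$ factor (the paper uses a single-scale $\varepsilon$-net with Bernstein's inequality and a union bound where you invoke Dudley chaining, but these are interchangeable here). The argument and the resulting bound match the paper's.
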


\begin{proof}
The proof is postponed to Appendix~\ref{sec:proof-thm-l1out}.
\end{proof}

\begin{remark}[Explicit regularization]
Theorem~\ref{thm:l1out} shows that variation-constrained neural networks, when trained and evaluated under the $\Lone$ loss, does not explicitly depend on the input dimension $d$.
Compared with the result in Theorem~\ref{thm:deep-nets}, the rate dependence on $n$ is similar, but the reliance on the variation $V$ is much relaxed. 

In practice, we can operate the following regularized optimization to reach the desirable statistical risk.
According to the method of Lagrange multipliers,  
the constrained optimization problem in (\ref{eq_lossl1}) can be formulated as 
\begin{equation} \label{eq:out_reg}	
\hat{f}_n = \argmin_{f \in \mathcal{F}} \frac1n\sum_{i = 1}^n|y_i - f(\vx_i)| + \lambda V(f)	^L
\end{equation}	
where $\mathcal{F}$ is the same neural network class without any constraint, and $\lambda$ is some appropriately chosen parameter.
Theorem \ref{thm:l1out} implies that for any $f \in \mathcal{F}$,
\begin{equation*} 
\R_{1}(f) \le \R_{1,n}(f) + O\biggl(\sqrt{\frac{\log n}{n}}\biggr) V(f)^L + O\biggl(\frac{\tau}{\sqrt{n}}\biggr),
\end{equation*}
which further implies that $\lambda$ can be chosen at the order of $ O(\sqrt{(\log n)/n})$ in \eqref{eq:out_reg}.

\end{remark}

\begin{remark}[Implication on neural network model selection]

There are two general ways of selecting a neural network model in practice. One is to consider a set of candidate architectures and choose the one with the best cross-validation performance.
Classical asymptotic analysis of the generalization error alludes that more neural weights (or neurons) tend to cause overfitting. For example, an information criterion-type derivation indicates that the generalization error typically grows linearly with the number of free parameters~\cite{DingOverview}. 
Nevertheless, recent research has shown that an overly-large network does not necessarily cause overfitting~\cite{zhang2016understanding}.
A possible reason is the failure of regularity conditions traditionally required for M-estimators. 

The other way, which is perhaps more prevalent in practice these days, is to train an extensive neural network with properly tuned regularization terms. 
A practical benefit of the second approach is its more straightforward hardware implementation and computation, as we do not need to implement and train multiple models separately. 
Theoretically, Theorem~\ref{thm:l1out} provides insight on when overfitting will not occur.

\end{remark}

%
%
%
%
%
%
%
%
%

\section{Conclusion} \label{sec_conclusion}

In this work, we showed tight statistical risk bounds for variation-constrained deep neural networks under both $\Lone$ and $\Ltwo$ loss functions.
Several related problems need further study.
First, a similar analysis may be emulated to study the performance of deep neural network-based classification models. 
Second, it would be interesting to study the $\Ltwo$ risk of a model that is trained from the $\Lone$ empirical risk, especially under non-IID or heavy-tail noises. 
The third problem is to study the relationship between the variation-based regularization and implicit regularization techniques (e.g., the early stopping and the dropout) practically operated in training deep neural networks.

\begin{appendix}

\section{Proof of Technical Lemmas}

In this Appendix, we prove the technical lemmas used in the proof of Theorem~\ref{thm:deep-nets}.

\subsection{Proof of Lemma \ref{lem:eps-net-error}}
\label{sec:proof-lem-eps-net-error}

Since 
\begin{align}
\|f-\hat{f}_n\|_{L_{\infty}(\mathbb{X})} \le \epsilon,
\end{align}
we have
\begin{align*}
&\E (\hat{f}_n(\vx) - f_*(\vx))^2 - \E (f(\vx) - f_*(\vx))^2 \\
=&~\E (\hat{f}_n(\vx) - f_*(\vx) + f(\vx) - f_*(\vx))(\hat{f}_n(\vx) - f(\vx)) \\
\le&~4V\E |\hat{f}_n(\vx) - f(\vx)| \\
\le&~4V\epsilon.
\end{align*}
and similarly,
\begin{align*}
&\frac1n \sum_{i = 1}^n \biggl\{(f(\vx_i) - f_*(\vx_i) - \varepsilon_i)^2 - (\hat{f}_n(\vx_i) - f_*(\vx_i) - \varepsilon_i)^2\biggr\} \\
=&~\frac1n \sum_{i = 1}^n \biggl\{(f(\vx_i) - f_*(\vx_i) - \varepsilon_i + \hat{f}_n(\vx_i) - f_*(\vx_i) - \varepsilon_i)(f(\vx_i) - \hat{f}_n(\vx_i))\biggr\} \\
=&~\frac1n \sum_{i = 1}^n (f(\vx_i) - f_*(\vx_i) + \hat{f}_n(\vx_i) - f_*(\vx_i))(f(\vx_i) - \hat{f}_n(\vx_i)) \\
&\quad - \frac1n \sum_{i = 1}^n \bigl\{ 2\varepsilon_i(f(\vx_i) - \hat{f}_n(\vx_i)) \bigr\} \\
\le&~(4V+2\max_i \varepsilon_i)\frac1n \sum_{i = 1}^n |f(\vx_i) - \hat{f}_n(\vx_i)| \\
\le&~4(V+\max_i \varepsilon_i)\epsilon.
\end{align*}

\subsection{Proof of Lemma \ref{lem:eps-net-number}}
\label{sec:proof-lem-eps-net-number}

We first show by the induction method that for any $1\le l<L$ and $1\le i \le r_l$,
\begin{equation} \label{eq:derivative}
\|\nabla^{k} \bm y_{l,i}\|_{\infty} \lesssim V^{lk}
\end{equation} 
for $k \le \alpha d$,
where we let $\alpha = 1/\delta$ and $\nabla$ is the differentiation operator on $\bm x$. 
First, according to Assumption~\ref{ass_activation}, we have for $l = 1$,
\begin{align*}
\|\nabla^{k} \bm y_{l,i}\|_{\infty} =&~\|\nabla^{k} \sigma(\bm w_{l, i}^{\top}x)\|_{\infty} \\
=&~\|\sigma^{(k)}(\bm w_{l, i}^{\top}x)\bm w_{l, i}^{\otimes k}\|_{\infty} \\
\lesssim&~\|\bm w_{l, i}^{\otimes k}\|_{\infty} \\
\le&~\|\bm w_{l, i}\|_{\infty}^k \le V^k.
\end{align*}
Assume that \eqref{eq:derivative} holds for $1, \ldots, l$.
Next, we will prove the claim also holds for $\bm y_{l+1,i} = \sigma(\bm w_{l+1, i}^{\top}\bm y_l)$ by showing that
\begin{equation} \label{eq:derivative_2}
\|\nabla^{k} \sigma^{(s)}(\bm w_{l+1, i}^{\top}\bm y_l)\|_{\infty} \lesssim V^{(l+1)k}
\end{equation} 
for any integers $1 \le i \le r_{l+1}$ and $1 \leq s \le \alpha d - k$,  also by using the induction method (on $k$) 
For $k = 1$, we have,
\begin{align*}
\|\nabla \sigma^{(s)}(\bm w_{l+1, i}^{\top}\bm y_l)\|_{\infty} =&~\|\sigma^{(s+1)}(\bm w_{l+1, i}^{\top}\bm y_l)\nabla (\bm w_{l+1, i}^{\top}\bm y_l)\|_{\infty} \\
\lesssim&~\|\nabla (\bm w_{l+1, i}^{\top}\bm y_l)\|_{\infty} \\
\le&~V\max_{1\le j\le r_l}\|\nabla \bm y_{l, j}\|_{\infty} \\
\lesssim&~V^{l+1}.
\end{align*}
Assume that \eqref{eq:derivative_2} holds for $1, \ldots, k-1$.
Then, for $s \le \alpha d - k$
\begin{align*}
\bigl\|\nabla^{k} \sigma^{(s)}(\bm w_{l+1, i}^{\top}\bm y_l)\bigr\|_{\infty} 
=&~\bigl\|\nabla^{k-1} \nabla \sigma^{(s)}(\bm w_{l+1, i}^{\top}\bm y_l)\bigr\|_{\infty} \\
=&~\bigl\|\nabla^{k-1} [\sigma^{(s+1)}(\bm w_{l+1, i}^{\top}\bm y_l)\nabla (\bm w_{l+1, i}^{\top}\bm y_l)]\bigr\|_{\infty} \\
=&~\biggl\|\nabla^{k-1} \sigma^{(s+1)}(\bm w_{l+1, i}^{\top}\bm y_l) \otimes \nabla (\bm w_{l+1, i}^{\top}\bm y_l) \\
	&\quad + \sigma^{(s+1)}(\bm w_{l+1, i}^{\top}\bm y_l)\nabla^{k} (\bm w_{l+1, i}^{\top}\bm y_l)\biggr\|_{\infty} \\
\lesssim&~\biggl\|\nabla^{k-1} \sigma^{(s+1)}(\bm w_{l+1, i}^{\top}\bm y_l) \otimes \nabla (\bm w_{l+1, i}^{\top}\bm y_l)\|_{\infty} + \|\nabla^{k} (\bm w_{l+1, i}^{\top}\bm y_l)\biggr\|_{\infty} \\
\le&~\biggl\|\nabla^{k-1} \sigma^{(s+1)}(\bm w_{l+1, i}^{\top}\bm y_l)\|_{\infty} \|\nabla (\bm w_{l+1, i}^{\top}\bm y_l)\|_{\infty} + \|\nabla^{k} (\bm w_{l+1, i}^{\top}\bm y_l)\biggr\|_{\infty} \\
\lesssim&~V^{(l+1)(k-1)}\cdot \biggl\|\sum_{j=1}^{r_{l}} w_{l+1, i, j} \bm y_{l,j} \biggr\|_{\infty} + V\cdot V^{lk} \\
\lesssim&~V^{(l+1)(k-1)}\cdot V\cdot V^{l} + V\cdot V^{lk} \lesssim V^{(l+1)k}.
\end{align*}
Therefore, we proved the bound in \eqref{eq:derivative_2}, which further implies the bound in \eqref{eq:derivative}.

Consequently, for any function $f \in \mathcal{F}_L(V)$ and $k \le \alpha d$,
\begin{align*}
\|\nabla^{k} f(\bm x) \|_{\infty}
= &\|\bm w_{L}^\T  \nabla^{k} \bm y_{L-1}\|_{\infty} \\
\le&~V\max_{1 \le i \le r_{L-1}} \|\nabla^{k} \bm y_{L-1, i}\|_{\infty} \\
\lesssim&~V\cdot V^{k(L-1)}.
\end{align*}
The above result implies $\mathcal{F}(V)$ belongs to the smooth functions class with order $\alpha d$. 


Then according to Lemma \ref{lem:entropy}, the covering number of $\mathcal{F}(V)$ satisfies
\begin{equation} 
\log N(\epsilon) \le \nu (V^{\alpha d(L-1) + 1})^{\frac{1}{\alpha}}\epsilon^{-\frac{1}{\alpha}} = \nu V^{d(L-1)+\frac{1}{\alpha}}\epsilon^{-\frac{1}{\alpha}}, \nonumber
\end{equation}
where $\nu$ is a constant that only depends on $d, \alpha$. We conclude the proof by invoking $\alpha = 1/\delta$.

\subsection{Proof of Lemma \ref{lem:gen-error}}
\label{sec:proof-lem-gen-error}

For any fixed $f \in \mathcal{N}_{\epsilon}$, we let
$\xi_i(f) = (f(\vx_i) - f_*(\vx_i) - \varepsilon_i)^2 - \varepsilon_i^2$, or $\xi_i$ for notational simplicity.
It can be verified that 
$$\E (\xi_i )= \E (f(\vx) - f_*(\vx))^2,$$ and 
\begin{align*}
\var(\xi_i) =&~\var\{(f(\vx_i) - f_*(\vx_i))^2 - 2\varepsilon_i(f(\vx_i) - f_*(\vx_i))\} \\
\le&~\E \{(f(\vx_i) - f_*(\vx_i))^2 - 2\varepsilon_i(f(\vx_i) - f_*(\vx_i))\}^2 \\
=&~\E (f(\vx_i) - f_*(\vx_i))^4 + 4\E \varepsilon_i^2(f(\vx_i) - f_*(\vx_i))^2 \\
\le&~4(V^2+\tau^2)\E (f(\vx) - f_*(\vx))^2.  
\end{align*}
In the last inequality, we have used the inequality that
\begin{align}
f(\bm x)  
 &= \bm w_{L}^{\top}\bm y_{L-1}
 = \bm w_{L}^{\top} \, [ \sigma(\bm w_{L-1, i}^{\top}\bm y_{L-2}), \ldots, \sigma(\bm w_{L-1, i}^{\top}\bm y_{L-2})]^\T \nonumber \\
 &\leq \|\bm w_{L}^{\top}\|_{1} \leq V , \quad \forall \bm x \in \mathbb{X}, \, \forall f\in \mathcal{F}_L(V)   . \label{eq_boundf}
\end{align}
According to the Bernstein's Inequality (Lemma~\ref{lemma_Bernstein} in the Appendix) and (\ref{eq_boundf}), 
 with probability at least $1 - \frac{\rho}{N(\epsilon)}$ we have
\begin{align}
&\E (f(\vx) - f_*(\vx))^2 - \frac1n \sum_{i = 1}^n \xi_i \nonumber \\
&\leq \frac{\max_i |\xi_i - \E (f(\vx) - f_*(\vx))^2 |}{n} \log \frac{N(\epsilon)}{\rho}
	+ \sqrt{\frac{\var(\zeta_1)}{n}\log \frac{N(\epsilon)}{\rho}} \nonumber \\
&\lesssim \frac{V(V+\max_i |\varepsilon_i|)}{n}\log \frac{N(\epsilon)}{\rho} + \sqrt{\frac{(V^2+\tau^2)\E (f(\vx) - f_*(\vx))^2}{n}\log \frac{N(\epsilon)}{\rho}}.  \label{eq_104}
\end{align}
It can be verified that the above (\ref{eq_104}) implies that 
\begin{align} 
\E (f(\vx) - f_*(\vx))^2 - \frac1n \sum_{i = 1}^n \xi_i \lesssim \frac{V(V+\max_i |\varepsilon_i|)}{n}\log \frac{N(\epsilon)}{\rho}. 
\label{eq_event}
\end{align}

By the union bound over $f \in \mathcal{N}_{\epsilon}$, (\ref{eq_event}) holds uniformly for all $f \in \mathcal{N}_{\epsilon}$ with probability at least $1 - \rho$.

\section{Auxiliary Lemmas}

\begin{lemma}[Bernstein's Inequality] \label{lemma_Bernstein}
Assume $X_1, X_2, \ldots, X_n$ are independent random variables,
then with probability at least $1-\rho$,
\begin{equation}
\biggl|\sum_{i=1}^n (X_i - \E X_i)\biggr| \lesssim \max_{1\leq i \leq n}|X_i - \E X_i|\log \frac{1}{\rho} + \sqrt{\sum_{i=1}^n \var(X_i)\log \frac{1}{\rho}}. \nonumber 
\end{equation}
\end{lemma}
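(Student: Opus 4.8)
The plan is to prove this as a standard Bernstein-type tail bound via the exponential-moment (Chernoff) method, reading $M := \max_{1\le i\le n}|X_i - \E X_i|$ as an almost-sure upper bound on the centered summands. This matches how the lemma is actually invoked in \eqref{eq_104}, where $M$ is replaced by the deterministic quantity $V(V+\max_i|\varepsilon_i|)$; without such a boundedness reading the stated two-term bound cannot hold, so this is the one conceptual caveat to record at the outset. Write $Y_i = X_i - \E X_i$, so that $\E Y_i = 0$, $|Y_i|\le M$ a.s., and $S = \sum_{i=1}^n Y_i$. Everything then reduces to controlling the moment generating function of $S$ and inverting the resulting tail estimate in $\rho$.

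First I would establish the single-variable moment generating function estimate (Bennett's lemma). Since $|Y_i|\le M$ gives $|Y_i|^k \le M^{k-2}Y_i^2$ and hence $|\E Y_i^k|\le M^{k-2}\var(X_i)$ for every $k\ge 2$, the series expansion of $e^{\lambda Y_i}$ (whose linear term vanishes because $\E Y_i=0$) yields, for $\lambda>0$,
\begin{equation}
\E e^{\lambda Y_i} \le \exp\Bigl(\tfrac{\var(X_i)}{M^2}\bigl(e^{\lambda M}-1-\lambda M\bigr)\Bigr). \nonumber
\end{equation}
By independence, $\E e^{\lambda S}\le \exp\bigl(\tfrac{\Sigma^2}{M^2}(e^{\lambda M}-1-\lambda M)\bigr)$ with $\Sigma^2 := \sum_{i=1}^n \var(X_i)$. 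Feeding this into the Chernoff bound $\P(S\ge t)\le e^{-\lambda t}\E e^{\lambda S}$, using the standard relaxation $e^u-1-u\le \tfrac{u^2/2}{1-u/3}$ with $u=\lambda M$, and optimizing over $\lambda$ produces the classical one-sided Bernstein bound; applying it to both $S$ and $-S$ and taking a union bound gives
\begin{equation}
\P\bigl(|S|\ge t\bigr) \le 2\exp\Bigl(-\frac{t^2/2}{\Sigma^2 + Mt/3}\Bigr). \nonumber
\end{equation}

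The final step is to invert this tail bound. Setting $u=\log(2/\rho)$ and solving $\tfrac{t^2/2}{\Sigma^2+Mt/3}=u$ gives a quadratic in $t$ whose positive root is $t = \tfrac{Mu}{3}+\sqrt{\tfrac{M^2u^2}{9}+2\Sigma^2 u}$, and the elementary inequality $\sqrt{a+b}\le\sqrt a+\sqrt b$ bounds this by $\tfrac{2Mu}{3}+\sqrt{2\Sigma^2 u}$. Hence, with probability at least $1-\rho$,
\begin{equation}
|S| \lesssim M\log\tfrac1\rho + \sqrt{\Sigma^2 \log\tfrac1\rho}, \nonumber
\end{equation}
which is exactly the claimed inequality once $\log(2/\rho)$ is absorbed into $\log(1/\rho)$ by the $\lesssim$ notation (valid for $\rho\le 1/2$). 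I expect the genuinely delicate parts to be the moment estimate $|\E Y_i^k|\le M^{k-2}\var(X_i)$ feeding the MGF bound, and the inversion of the mixed linear/quadratic tail into the clean two-term form, where the constants in front of $M\log(1/\rho)$ and $\sqrt{\Sigma^2\log(1/\rho)}$ must be tracked carefully; the remaining arithmetic is routine.
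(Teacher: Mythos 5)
Your Chernoff-method argument is, on its own terms, a correct and complete proof of the \emph{classical} Bernstein inequality: the moment estimate $|\E Y_i^k| \le M^{k-2}\var(X_i)$, the Bennett-type MGF bound, the relaxation $e^u-1-u \le \frac{u^2/2}{1-u/3}$, the two-sided Chernoff bound, and the inversion of the quadratic in $t$ are all standard and executed correctly. For comparison purposes, note that the paper gives no proof of this lemma at all (it is stated as an auxiliary fact), so the only question is whether what you proved is actually the stated lemma.

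It is not, and the discrepancy is exactly the caveat you flagged and then resolved the wrong way. The lemma as stated puts the \emph{random} quantity $\max_{1\le i\le n}|X_i - \E X_i|$ on the right-hand side, and the paper genuinely uses it in that random form: in \eqref{eq_104} the summands are $\xi_i = (f(\vx_i)-f_*(\vx_i))^2 - 2\varepsilon_i(f(\vx_i)-f_*(\vx_i))$, and the ``max'' term is bounded by $V(V+\max_i|\varepsilon_i|)$, which is \emph{not} deterministic as you assert --- $\max_i|\varepsilon_i|$ is random, the noise is unbounded under Assumption~\ref{ass_data} (only sub-Gaussian/sub-exponential), and the main proof later controls it in expectation via $\E \max_i|\varepsilon_i| \lesssim \tau\log n$. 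Under your reading, where $M$ is an almost-sure bound, the lemma becomes vacuous in precisely the application it exists for (the essential supremum is infinite for Gaussian noise); and even for bounded summands, a bound in terms of a deterministic $M \ge \max_i|Y_i|$ does not imply the stated bound in terms of the random maximum, since the inequality between the two right-hand sides goes the wrong way. The random-max version is true, but it is strictly stronger than classical Bernstein and requires a different argument: for instance, a one-sided truncation at level $y$ (on the event $\max_i Y_i \le y$ the sum coincides with the sum of truncated variables, whose means are nonpositive), a Bennett-type exponent $(1+u)\log(1+u)-u$ rather than the Bernstein relaxation (otherwise one loses a logarithmic factor), and a peeling/union step over dyadic scales that ties the truncation level to the magnitude of the sum. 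So the gap is not in your algebra but in the statement you proved: it is weaker than the lemma as stated and as used, and the justification offered for the weakening rests on a misreading of \eqref{eq_104}.
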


\begin{lemma}[Metric Entropy Bound] \label{lem:entropy}
Let $\mathcal{F}$ be the class of all functions $f : [0, 1]^d \to \mathbb{R}$, whose partial derivatives up to order $\alpha d$ (which is supposed to be a positive integer) exist and are uniformly bounded by a constant $M$.
Define the $L_{\infty}$ metric as $\|f_1-f_2\|_{L_{\infty}} = \max_{\bm x \in [0, 1]^d}|f_1(\bm x) - f_2(\bm x)|$, and $N(\varepsilon)$ is the minimum covering number under such a metric.  
Then,
\begin{equation}
\log N(\varepsilon) \le \nu M^{\frac{1}{\alpha}} \varepsilon^{-\frac{1}{\alpha}}, \nonumber
\end{equation}
where $\nu$ depends on $\alpha, d, M$ only.
\end{lemma}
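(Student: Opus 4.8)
The plan is to prove this as an instance of the classical Kolmogorov--Tikhomirov metric entropy bound for balls of smooth functions, by explicitly constructing a finite $\varepsilon$-net of piecewise polynomials and then counting it. Writing $p = \alpha d$ for the order of smoothness, I would first partition the cube $[0,1]^d$ into $h^{-d}$ subcubes of side length $h$, and on each subcube approximate $f$ by $P_f$, its degree-$(p-1)$ Taylor polynomial expanded about the subcube's center. Since all partial derivatives of order $p$ are uniformly bounded by $M$, Taylor's theorem with remainder gives a cell-wise sup-norm error $\|f-P_f\|_{\infty}\lesssim M h^{p}$. Choosing $h \asymp (\varepsilon/M)^{1/p}$ makes this local error at most $\varepsilon/2$ and produces $h^{-d}\asymp (M/\varepsilon)^{d/p} = (M/\varepsilon)^{1/\alpha}$ cells, which is already the order claimed in the exponent.

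The net is then obtained by discretizing the finitely many coefficients of each local polynomial. Each $P_f$ is determined by the $K=\binom{p-1+d}{d}$ derivatives of $f$ at the cell center, every one of which is bounded up to factorials by $M$. Because the monomial of multi-index $\beta$ has size at most $(h/2)^{|\beta|}$ on the cell, the order-$|\beta|$ coefficient may be rounded to step size $\asymp \varepsilon\, h^{-|\beta|}$ while keeping the extra error below $\varepsilon/2$, so higher-order coefficients need progressively fewer levels. Replacing each $f$ by the resulting rounded piecewise polynomial yields a set $G$ with $\min_{g\in G}\|f-g\|_{\infty}\le\varepsilon$, whence $N(\varepsilon)\le|G|$, and it remains to bound $\log|G|$.

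Counting the choices cell by cell gives $\log|G|\lesssim (M/\varepsilon)^{1/\alpha}\log(M/\varepsilon)$: the cell count has the right order, but treating the cells as independent introduces a spurious $\log(M/\varepsilon)$ factor. Removing this factor is the main obstacle, and is precisely the content of the sharp Kolmogorov--Tikhomirov estimate. The route I would take is to exploit that the coefficient functions are themselves smooth, so that the order-$\beta$ derivative of $f$ at adjacent cell centers differs by only $O(Mh)$; the local polynomials across neighbouring cells are therefore strongly correlated, and the number of degrees of freedom that genuinely require full-range encoding is $O((M/\varepsilon)^{1/\alpha})$ rather than that many times $\log(M/\varepsilon)$. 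Equivalently, one may pass to a compactly supported wavelet description, where membership in the smoothness ball bounds the level-$j$ coefficients by $\lesssim M2^{-j(p+d/2)}$, truncate at $2^{J}\asymp(M/\varepsilon)^{1/p}$, and verify that the retained coefficients cost $O(1)$ bits on average, giving $\log N(\varepsilon)\lesssim 2^{Jd}\asymp (M/\varepsilon)^{1/\alpha}=\nu M^{1/\alpha}\varepsilon^{-1/\alpha}$ with $\nu$ depending only on $\alpha$ and $d$. I would also note that the logarithmic factor is in any case harmless for the use of this lemma in the proof of Lemma~\ref{lem:eps-net-number}: there $\varepsilon=1/n$ and $M$ is polynomial in $V$, so $\log(M/\varepsilon)=O(\log n)$ can be absorbed into $\varepsilon^{-\delta}$ by enlarging $\delta$ infinitesimally, which is all the downstream argument needs.
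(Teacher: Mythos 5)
Your construction is correct in substance but follows a genuinely different route from the paper. The paper's proof is a two-line reduction: it observes that every $\varepsilon$-bracket $[l,u]$ is contained in an $L_\infty$-ball of radius $\varepsilon/2$ about $(l+u)/2$, so the covering number is dominated by the bracketing number, and then it cites the bracketing-entropy theorem of van der Vaart (Theorem 2.1 of the cited reference) for smooth function classes, which delivers the rate $\nu M^{1/\alpha}\varepsilon^{-1/\alpha}$ directly. You instead rebuild the classical Kolmogorov--Tikhomirov argument from scratch: cells of side $h\asymp(\varepsilon/M)^{1/p}$ with $p=\alpha d$, local Taylor polynomials with remainder $O(Mh^p)$, and coefficient quantization at step $\varepsilon h^{-|\beta|}$. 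This is the right construction and gets the correct exponent $d/p=1/\alpha$; your diagnosis that naive cell-by-cell counting leaves a spurious $\log(M/\varepsilon)$, and that the cure is to encode only the $O(Mh)$-sized increments of the derivatives between adjacent cell centers (or equivalently to pass to a wavelet expansion), is exactly how the sharp bound is proved classically. What you gain is a self-contained argument with the mechanism visible; what you lose is that the one genuinely delicate step --- verifying that the adjacent-cell differencing costs $O(1)$ bits per retained coefficient --- is only sketched, so as written you are still implicitly leaning on the Kolmogorov--Tikhomirov theorem just as the paper leans on the bracketing bound. Your closing observation is a legitimate independent way to finish for the purposes of this paper: with $\varepsilon=1/n$ and $M$ polynomial in $V$, the factor $\log(M/\varepsilon)=O(\log n + \log V)$ is absorbed into $\varepsilon^{-\delta}$ by shrinking $\delta$, so even the lossy count suffices for Lemma~\ref{lem:eps-net-number} and Theorem~\ref{thm:deep-nets}.
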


\begin{proof}
For two real-valued functions $l$ and $u$, the bracket $[l, u]$ is the set of all functions $f$ satisfying $l \le f \le u$.
An $\varepsilon$-bracket in $L_2(\mu)$-space is a bracket $[l, u]$ with $\|u - l\|_{L_2(\mu)} \le \varepsilon$.
The bracketing number $N_{[]}(\varepsilon)$ is the smallest number of $\varepsilon$-brackets needed to cover the function class.
Since the $\varepsilon$-bracket $[l, u]$ is contained in the ball with a radius of $\frac{\varepsilon}{2}$ centered on $\frac{l+u}{2}$ in $L_2(\mu)$, an $\varepsilon$-bracket covering is an $\varepsilon$ covering,
and thus $N_{[]}(\varepsilon) \ge N(\varepsilon)$. 
Then, this lemma directly follows from \cite[Theorem 2.1]{van1994bracketing}.
\end{proof}

The following result is a version of Talagrand's contraction
lemma. 

\begin{lemma}[Contraction Lemma] \label{lem:contract}
Suppose that a function $g$ is $L$-Lipschitz and $g(0) = 0$.
Suppose that $\xi_1, \ldots, \xi_n $ are IID symmetric Bernoulli random variables taking values from $\{1,-1\}$.
Then, for any function class $\mathcal{F}$ mapping from $\mathbb{X}$ to $\mathbb{R}$, and any set $\{\vx_1, \vx_2, \ldots, \vx_n\}$, we have
\begin{equation}
\E \sup_{f \in \mathcal{F}}\biggl|\frac1n \sum_{i = 1}^n \xi_i g(f(\vx_i))\biggr| \le 2L \E \sup_{f \in \mathcal{F}}\biggl|\frac1n \sum_{i = 1}^n \xi_i f(\vx_i)\biggr|. \nonumber
\end{equation}
\end{lemma}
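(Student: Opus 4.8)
The plan is to reduce the statement to the classical one-sided contraction inequality and then account for the absolute values by a symmetrization argument that produces the extra factor of two. Concretely, I would first prove that
\[
\E \sup_{f \in \mathcal{F}} \frac1n \sum_{i=1}^n \xi_i g(f(\vx_i)) \;\le\; L\, \E \sup_{f \in \mathcal{F}} \frac1n \sum_{i=1}^n \xi_i f(\vx_i),
\]
and then upgrade to the two-sided bound. The one-sided inequality I would establish by \emph{peeling} one Rademacher variable at a time. Fixing $\xi_1,\dots,\xi_{n-1}$, the partial sum $a(f) := \sum_{i<n}\xi_i g(f(\vx_i))$ becomes a fixed functional of $f$, so it suffices to prove a single-coordinate comparison and then iterate it over $i=n,n-1,\dots,1$, each iteration replacing one term $g(f(\vx_i))$ by $L f(\vx_i)$. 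The normalization $\tfrac1n$ plays no role and is carried along throughout.

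The core single-coordinate step asserts that for any functional $a(\cdot)$ and any point $\vx$,
\[
\E_{\xi}\, \sup_{f}\bigl[a(f) + \xi\, g(f(\vx))\bigr] \;\le\; \E_{\xi}\, \sup_{f}\bigl[a(f) + L\,\xi\, f(\vx)\bigr],
\]
where $\xi$ is a single symmetric sign. I would prove it by writing the expectation over $\xi\in\{\pm1\}$ as the average of the two signs, merging the two suprema into one supremum over a pair $(f,f')$, and then using that $g$ is $L$-Lipschitz: after choosing the orientation of the pair so that $f(\vx)\ge f'(\vx)$, the difference $g(f(\vx))-g(f'(\vx))$ is at most $L\bigl(f(\vx)-f'(\vx)\bigr)$. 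Splitting the pair supremum back into two single suprema yields exactly the right-hand side. Notably this step uses only the Lipschitz property, not $g(0)=0$; iterating it across all $n$ coordinates gives the one-sided inequality.

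It remains to pass from the one-sided bound to the stated two-sided bound, and this is where I expect the main difficulty. The absolute value is what forces the factor $2$: it corresponds to taking the supremum over the symmetrized collection obtained by adjoining $-g\circ f$ to $g\circ f$, and the hypothesis $g(0)=0$ (equivalently $|g(t)|\le L|t|$) is exactly what keeps the negated branch controlled---without it the inequality fails, as the constant function $g\equiv c$ already shows. The delicate point is that one cannot simply add the one-sided bounds for $g$ and for $-g$, since $\sup_f|\cdot|$ is a maximum of two suprema whose smaller branch need not be nonnegative for a fixed sign pattern. I would therefore carry the absolute value through the peeling argument in its convex-functional form (the version of the contraction principle in which a convex increasing $F$ is applied to the supremum, as in Ledoux--Talagrand), from which the factor $2$ emerges after undoing the reflection; this is the step requiring the most care, the remaining algebra being routine.
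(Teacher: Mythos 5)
The paper never actually proves this lemma: it is stated among the auxiliary results and attributed to Talagrand (it is Theorem 4.12 of Ledoux--Talagrand, after rescaling $g$ by $L$), so your attempt is being compared against a citation rather than an argument. Your reconstruction follows the standard textbook route. The one-sided peeling step is correct and essentially complete as you describe it: average over the sign of a single $\xi_i$, merge the two suprema into a supremum over pairs $(f,f')$, orient the pair so that $g(f(\vx_i))-g(f'(\vx_i))\le L\bigl(f(\vx_i)-f'(\vx_i)\bigr)$ applies, and split back; you are also right that $g(0)=0$ plays no role there. Where your write-up stops short of a proof is the passage to the two-sided bound, which you correctly flag as the delicate point but only gesture at via the convex-functional form of the contraction principle. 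A cleaner and fully elementary way to finish, which avoids pushing a convex $F$ through the peeling step, is this: with $h(f)=\frac1n\sum_i\xi_i g(f(\vx_i))$, use $\sup_f|h(f)|\le\bigl(\sup_f h(f)\bigr)^{+}+\bigl(\sup_f(-h(f))\bigr)^{+}$, and note that since $g(0)=0$ the zero function satisfies $g\circ 0=0$, so $\bigl(\sup_{f\in\mathcal{F}} h(f)\bigr)^{+}=\sup_{f\in\mathcal{F}\cup\{0\}}\frac1n\sum_i\xi_i g(f(\vx_i))$ is again a genuine supremum of a contracted process over the augmented class $\mathcal{F}\cup\{0\}$. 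Applying your one-sided inequality to $\mathcal{F}\cup\{0\}$ once with $g$ and once with $-g$ (both $L$-Lipschitz and vanishing at $0$), and then bounding $\sup_{f\in\mathcal{F}\cup\{0\}}\pm\frac1n\sum_i\xi_i L f(\vx_i)\le L\sup_{f\in\mathcal{F}}\bigl|\frac1n\sum_i\xi_i f(\vx_i)\bigr|$, produces the two terms and hence the factor $2$. This pins down exactly where $g(0)=0$ enters (your counterexample $g\equiv c$ is the right one to show it cannot be dropped) and turns the step you identified as ``requiring the most care'' into two lines.
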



\begin{lemma}[Minimax lower bound]
\label{lem:minmax}
Suppose $d$ is a metric on $\mathcal{F}$.
Then we have
\begin{align}
\inf_{\hat{f}} \sup_{f \in \mathcal{F}} \E d^2(f, \hat{f}) \gtrsim \varepsilon^2,
\end{align}
where $\varepsilon$ satisfies
\begin{align}
M_d(\varepsilon) = 4V_k(\varepsilon) + 2\log 2.
\end{align}
\end{lemma}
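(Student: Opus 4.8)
The plan is to prove this by Fano's information-theoretic method, in the spirit of Yang and Barron. First I would reduce estimation to a multiple-hypothesis test. Let $\{f_1,\ldots,f_M\}\subset\mathcal{F}$ be a maximal $2\varepsilon$-separated set in the metric $d$, so that its cardinality satisfies $\log M = M_d(\varepsilon)$ and $d(f_j,f_k)\ge 2\varepsilon$ for $j\ne k$. Draw an index $J$ uniformly from $\{1,\ldots,M\}$, generate the $n$ observations from the model indexed by $f_J$, and to any candidate estimator $\hat f$ associate the test $\psi=\argmin_{1\le j\le M} d(\hat f, f_j)$. The $2\varepsilon$-separation forces the implication $d(\hat f,f_J)<\varepsilon\Rightarrow\psi=J$, so that $\{\psi\ne J\}\subseteq\{d(\hat f,f_J)\ge\varepsilon\}$ and the testing error controls the estimation error from below.

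Next I would apply Fano's inequality to the uniform prior, which gives
\[
\pr(\psi\ne J)\ge 1-\frac{I(J;\text{data})+\log 2}{\log M},
\]
where $I(J;\text{data})$ denotes the mutual information between the hidden index and the $n$ observations. The heart of the argument is to bound this mutual information. Here I would use the Yang--Barron device: cover $\{f_1,\ldots,f_M\}$ by Kullback--Leibler balls and center the predictive distribution at the resulting mixture, which yields $I(J;\text{data})\le V_k(\varepsilon)$, the Kullback--Leibler covering quantity (this is where the dependence on $n$ enters, through the $n$-fold product measure).

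Substituting the balancing condition $M_d(\varepsilon)=4V_k(\varepsilon)+2\log 2$ then gives
\[
\pr(\psi\ne J)\ge 1-\frac{V_k(\varepsilon)+\log 2}{4V_k(\varepsilon)+2\log 2}\ge\frac12 ,
\]
since $2(V_k(\varepsilon)+\log 2)\le 4V_k(\varepsilon)+2\log 2$. Finally I would convert this into a risk bound: because $\{\psi\ne J\}\subseteq\{d(\hat f,f_J)\ge\varepsilon\}$, Markov's inequality yields $\sup_{f}\E d^2(\hat f,f)\ge\varepsilon^2\,\pr(d(\hat f,f_J)\ge\varepsilon)\ge\varepsilon^2/2$, and taking the infimum over $\hat f$ establishes $\inf_{\hat f}\sup_f\E d^2(f,\hat f)\gtrsim\varepsilon^2$.

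I expect the main obstacle to be the mutual-information bound $I(J;\text{data})\le V_k(\varepsilon)$. This step requires selecting a good centering (mixture) measure from a Kullback--Leibler cover and controlling the worst-case divergence $\max_j\mathrm{KL}(P_{f_j}\|\bar P)$, which is precisely where one must reconcile the $d$-geometry used to build the packing set with the Kullback--Leibler geometry that governs the information content of the data. The remaining ingredients --- the packing reduction, Fano's inequality, and the Markov conversion --- are routine once $M_d$ and $V_k$ are balanced through the stated equation.
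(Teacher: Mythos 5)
Your argument is correct and is essentially the same approach the paper relies on: the paper's ``proof'' of this lemma is a one-line citation to Theorem~1 of Yang and Barron (1999), and what you have written is precisely the standard proof of that theorem (packing-to-testing reduction, Fano's inequality with the mutual information controlled by a Kullback--Leibler cover of the $n$-sample experiment, and the balance equation $M_d(\varepsilon)=4V_k(\varepsilon)+2\log 2$ forcing the testing error above $1/2$). You also correctly identify the only delicate point --- that $V_k(\varepsilon)$ must be interpreted as the KL-covering quantity for the joint distribution of all $n$ observations so that the bound $I(J;\text{data})\le V_k(\varepsilon)$ absorbs the $n$-fold product term --- which is a looseness in the lemma's statement itself rather than a gap in your argument.
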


\begin{proof}
The result directly follows from~\cite[Theorem 1]{yang1999information}.
\end{proof}

\section{Proof of Proposition~\robustrefThmMinimax}
\label{sec:proof-prop-minimax}

We define a subclass of $\mathcal{F}_2(V)$ by
\begin{equation}
\mathcal{F}_0 = \biggl\{f : \mathbb{R}^d \to \mathbb{R} \Bl f(\bm x) = V\sigma(\vw^{\top}\bm x), \|\vw\|_2 = 1\biggr\}.
\end{equation}
It can be verified that
\begin{align*}
\E |\sigma(\vw_1^{\top}\vx) - \sigma(\vw_2^{\top}\vx)| 
&\ge \E\biggl\{ \inf_{u} \sigma'(u) \cdot |\vw_1^{\top}\vx - \vw_2^{\top}\vx| \cdot \mathbb{I}(\vw_1^{\top}\vx, \vw_2^{\top}\vx \in \mathcal{S}) \biggr\}\\
&\gtrsim \|\vw_1 - \vw_2\|_2.
\end{align*}
Let $M_1(\epsilon)$ denote the packing $\epsilon$-entropy  
of $\mathcal{F}_0$ with the $\Lone$ distance.
Then, $M_1(\epsilon)$ is greater than the packing $\epsilon$-entropy with the $\Ltwo$ distance, written as
$
M_1(\epsilon) \ge M_2(\epsilon) \gtrsim d.
$
Let $V_k(\epsilon)$ denote the covering $\epsilon$-entropy of $\mathcal{F}_0$ with the square root Kullback-Leibler divergence,
then according to its relation with the $\Ltwo$ distance shown in~\cite{yang1999information},
we have
\begin{align}
V_k(\epsilon) \le M_2(\sqrt{2}\epsilon) \lesssim d\log \frac{1}{\epsilon}. \nonumber
\end{align}
Hence, according to Lemma~\ref{lem:minmax}, for $q = 1, 2$,
\begin{equation}
\inf_{\hat{f}_n}\sup_{f \in \mathcal{F}_2(V)} \|\hat{f}_n - f\|_{L_q(\mu)} \ge \inf_{\hat{f}_n}\sup_{f \in \mathcal{F}_0} \|\hat{f}_n - f\|_{L_p(\mu)} \gtrsim V\sqrt{\frac{d}{n}}. 
\end{equation}
This concludes the proof.

\section{Proof of Theorem~\robustrefThmLout}
\label{sec:proof-thm-l1out}

We define the empirical risk 
\begin{equation}
\R_{1,n}(f) = \E \biggl( \frac1n \sum_{i = 1}^n |f_*(\vx_i) + \varepsilon_i - f(\vx_i)| \biggr) - \E |\varepsilon|.
\end{equation}
Since $\hat{f}_n$ minimizes $n^{-1} \sum_{i = 1}^n |f_*(\vx_i) + \varepsilon_i - f(\vx_i)|$ in $\mathcal{F}_L(V)$,
we have
\begin{equation}
\R_{1}(\hat{f}_n) \le \R_{1}(\hat{f}_n) - \{\R_{1,n}(\hat{f}_n) - \R_{1,n}(f_*)\} = \R_{1}(\hat{f}_n) - \R_{1,n}(\hat{f}_n). \label{new11}
\end{equation}
In the following, we will analyze the term $\R_{1}(\hat{f}_n) - \R_{1,n}(\hat{f}_n)$ in (\ref{new11}). 

Let $\vz_i$'s denote IID copies of $\vx_i$'s.
\begin{align*}
\R_{1}(\hat{f}_n) - \R_{1,n}(\hat{f}_n) =&~\E \frac1n \sum_{i = 1}^n \biggl\{|\hat{f}_n(\vz_i) - f_*(\vz_i) - \varepsilon_i| - |\hat{f}_n(\vx_i) - f_*(\vx_i) - \varepsilon_i|\biggr\} \\
\le&~\E \sup_{f \in \mathcal{F}_L(V)}\frac1n \sum_{i = 1}^n \biggl\{|f(\vz_i) - f_*(\vz_i) - \varepsilon_i| - |f(\vx_i) - f_*(\vx_i) - \varepsilon_i|\biggr\}  \\
\le&~2\E \sup_{f \in \mathcal{F}_L(V)}\frac1n \sum_{i = 1}^n \xi_i |f(\vz_i) - f_*(\vz_i) - \varepsilon_i|,  
\end{align*}
where $\xi_1, \ldots, \xi_n$ are IID symmetric Bernoulli random variables that are independent with $\vz_i$'s and take values from $\{1,-1\}$. 
According to Lemma~\ref{lem:contract}, 
since $g(x) = |x|$ is $1$-Lipschitz and $g(0) = 0$,
we have
\begin{align*}
\E \sup_{f \in \mathcal{F}_L(V)}\frac1n \sum_{i = 1}^n \xi_i |f(\vz_i) - f_*(\vz_i) - \varepsilon_i| 
\le&~2\E \sup_{f \in \mathcal{F}_L(V)}|\frac1n \sum_{i = 1}^n \xi_i (f(\vz_i) - f_*(\vz_i) - \varepsilon_i)| \\
\le&~2\E \sup_{f \in \mathcal{F}_L(V)}\biggl|\frac1n \sum_{i = 1}^n \xi_i f(\vz_i)\biggr| + 2\sqrt{\frac{\E y^2}{n}} \\
\lesssim&~\E \sup_{f \in \mathcal{F}_L(V)}\biggl|\frac1n \sum_{i = 1}^n \xi_i f(\vz_i)\biggr| + \frac{V+\tau}{\sqrt{n}}.
\end{align*}

To conclude the proof, it remains to show that 
$$\E \sup_{f \in \mathcal{F}}\biggl|\frac1n \sum_{i = 1}^n \xi_i f(\vz_i)\biggr| \lesssim V^{L-1}\sqrt{\frac{d\log n}{n}}.$$
Let $W(f)=\{\bm w_1, \ldots, \bm w_L\}$ denote the neural weights of $f$ at all the layers.
It can be verified that
\begin{align}
\E \sup_{f \in \mathcal{F}_L(V)} \biggl|\frac1n \sum_{i = 1}^n \xi_i f(\vz_i)\biggr| 
=& \E \sup_{W(f): \, f \in \mathcal{F}_L(V)} \biggl|\langle \vw_L, \frac1n \sum_{i = 1}^n \xi_i \bm y_{L-1}(\vz_i) \rangle \biggr| \nonumber \\
\le& V \E \sup_{W(f): \, f  \in \mathcal{F}_L(V)}\biggl\|\frac1n \sum_{i = 1}^n \xi_i \bm y_{L-1}(\vz_i) \biggr\|_{\infty} \nonumber \\
\le& V \E \sup_{W(f): \, f  \in \mathcal{F}_L(V)} \biggl|\frac1n \sum_{i = 1}^n \xi_i \sigma(\bm w_{L-1, j}^{\top}\bm y_{L-2}(\vz_i)) \biggr| \nonumber \\
\lesssim& V \E \sup_{W(f): \, f  \in \mathcal{F}_L(V)} \biggl|\frac1n \sum_{i = 1}^n \xi_i \bm w_{L-1, j}^{\top}\bm y_{L-2}(\vz_i) \biggr| \nonumber \\
\lesssim& \ldots \nonumber\\
\lesssim& V^{L-1}\E \sup_{\vw_1 \in \mathbb{R}^d : \|\vw_1\|_1 \le V} \biggl|\frac1n \sum_{i = 1}^n \xi_i \sigma(\vw_1^{\top}\vz_i)\biggr| \nonumber \\
\lesssim& V^{L-1}\sqrt{\frac{d\log n}{n}} \label{eq_j1}, 
\end{align}
where the last inequality follows from the following lemma.
\begin{lemma}
Suppose that $\xi_1, \ldots, \xi_n$ are IID symmetric Bernoulli random variables that are independent with $\vz_i$'s and take values from $\{1,-1\}$. For any given $V$, we have
\begin{align}
\E \sup_{\vw \in \mathbb{R}^d : \|\vw\|_1 \le V} \biggl|\frac1n \sum_{i = 1}^n \xi_i \sigma(\vw^{\top}\vz_i)\biggr| \lesssim \sqrt{\frac{d\log n}{n}}.
\end{align}
\end{lemma}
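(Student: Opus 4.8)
The plan is to control the empirical Rademacher complexity of the uniformly bounded class $\mathcal{G} = \{\vz \mapsto \sigma(\vw^{\top}\vz) : \|\vw\|_1 \le V\}$ through a single-scale covering argument, the crucial point being that the covering number can be bounded \emph{independently of $V$}. First I would condition on $\vz_1,\ldots,\vz_n$ and work with the empirical seminorm $\|g\|_n^2 = n^{-1}\sum_i g(\vz_i)^2$, writing $N(\epsilon)$ for the corresponding $\epsilon$-covering number of $\mathcal{G}$. By Assumption~\ref{ass_activation} every $g \in \mathcal{G}$ is bounded by a constant $M$, so the class stays uniformly bounded no matter how large $\|\vw\|_1$ grows; this saturation is precisely what eventually removes $V$ from the bound.

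The heart of the argument is the estimate $\log N(\epsilon) \lesssim d\log(1/\epsilon)$, uniform in $V$. I would obtain it by bounding the pseudo-dimension of $\mathcal{G}$: since $\sigma$ is monotone (as for the logistic activation), the subgraph sets $\{(\vz,t) : \sigma(\vw^{\top}\vz) > t\} = \{(\vz,t) : \vw^{\top}\vz > \sigma^{-1}(t)\}$ are half-spaces in $\mathbb{R}^{d+1}$, whose VC dimension is $d+1$. Hence $\mathcal{G}$ has pseudo-dimension at most $d+1$, and the standard entropy bound for uniformly bounded function classes of finite pseudo-dimension yields $\log N(\epsilon) \lesssim d\log(1/\epsilon)$. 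Restricting $\vw$ to the $\ell_1$-ball of radius $V$ only shrinks $\mathcal{G}$, so this bound does not depend on $V$ at all.

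With the covering number in hand I would discretize at a single scale. Choosing an $\epsilon$-net $\mathcal{G}_\epsilon$ and, for each $g$, a nearest $g' \in \mathcal{G}_\epsilon$, Cauchy--Schwarz together with $\xi_i^2 = 1$ bounds the approximation term $|n^{-1}\sum_i \xi_i(g-g')(\vz_i)|$ by $\|g-g'\|_n \le \epsilon$. For the finite maximum over $\mathcal{G}_\epsilon$, each $n^{-1}\sum_i \xi_i g'(\vz_i)$ is a centered sum of independent bounded terms, hence sub-Gaussian with parameter $O(M^2/n)$, and Massart's finite-class maximal inequality gives $\mathbb{E}_{\xi}\max_{g'\in\mathcal{G}_\epsilon}|n^{-1}\sum_i \xi_i g'(\vz_i)| \lesssim \sqrt{\log N(\epsilon)/n}$. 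Combining the two pieces and taking expectation over the $\vz_i$'s yields
\[
\E \sup_{\|\vw\|_1 \le V}\Bigl|\frac1n\sum_{i=1}^n \xi_i \sigma(\vw^{\top}\vz_i)\Bigr| \lesssim \epsilon + \sqrt{\frac{d\log(1/\epsilon)}{n}} .
\]
Taking $\epsilon = 1/n$ makes the first term negligible and produces $\log(1/\epsilon) = \log n$, giving the claimed $\sqrt{d\log n/n}$; the extra $\log n$ factor, relative to what a full chaining (Dudley) integral would give, is exactly the price of this one-step discretization.

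The main obstacle is the $V$-free covering bound of the second paragraph. The tempting shortcut---peeling off $\sigma$ with the contraction lemma (Lemma~\ref{lem:contract}) and reducing to the linear class $\{\vw^{\top}\vz : \|\vw\|_1 \le V\}$---fails for our purposes, since that class has Rademacher complexity of order $V\sqrt{\log d/n}$, reintroducing the very factor $V$ we must avoid. Escaping this requires using the \emph{boundedness and monotonicity} of $\sigma$ rather than merely its Lipschitz constant, which is what the half-space/pseudo-dimension computation exploits; carefully verifying that the subgraph class really is VC of dimension $O(d)$ uniformly in $V$ is the step that demands the most care.
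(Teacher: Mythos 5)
Your proof is correct in outline but takes a genuinely different route from the paper's. The paper discretizes the \emph{parameter space}: it places an explicit lattice $W_{\varepsilon}$ with coordinate spacing $\varepsilon/(2d)$ inside the $\ell_1$-ball $\{\|\vw\|_1 \le V\}$, uses the Lipschitz property of $\sigma$ (bounded first derivative from Assumption~\ref{ass_activation}) together with $\|\vz\|_\infty \le M$ to show this induces a sup-norm $\varepsilon$-net on the function class, applies Bernstein's inequality pointwise, and takes a union bound with $\log \textrm{card}(W_{\varepsilon}) \lesssim d\log(nd/\varepsilon)$ before converting the tail bound to an expectation. You instead discretize the \emph{function class} directly, bounding its empirical $L_2$ covering number via the pseudo-dimension of $\{\vz \mapsto \sigma(\vw^{\top}\vz)\}$ and then applying Massart's finite-class lemma. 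Your route buys a covering bound that is entirely free of $V$ (the paper's entropy carries a $\log V$ inside the logarithm, harmless only because $V$ is treated as a fixed constant in this lemma), and your observation that the contraction-lemma shortcut would reintroduce a factor of $V$ is accurate and consistent with the paper's own remarks. The paper's route is more elementary and needs nothing about $\sigma$ beyond a bounded derivative.

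The one substantive caveat is your reliance on \emph{monotonicity} of $\sigma$ to identify the subgraph sets with half-spaces. Assumption~\ref{ass_activation} requires only that $\sigma$ be bounded with $\sigma(z)\to 0,1$ at $\mp\infty$ and uniformly bounded derivatives; it does not force $\sigma$ to be monotone, so $\sigma^{-1}$ need not exist and the pseudo-dimension computation does not go through for every activation admitted by the paper. Your proof therefore establishes the lemma for the logistic function and other monotone sigmoids, but not under the paper's stated hypotheses in full generality; to cover the general case you would either need to add monotonicity as an assumption or fall back on a Lipschitz-based net in parameter space as the paper does. The remaining steps (Cauchy--Schwarz for the approximation term, the sub-Gaussian maximal inequality at scale $M^2/n$, and the choice $\epsilon = 1/n$) are all sound.
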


\begin{proof}
The proof will be based on an $\varepsilon$-net argument together with the union bound.
For any $\varepsilon$, let $W_{\varepsilon} \subset \mathbb{R}^d$ denote the subset
\begin{equation*}
W_{\varepsilon} = \biggl\{\vw = \frac{\varepsilon}{2d}(i_1, i_2, \ldots, i_d) : i_j \in \mathbb{Z}, \|\vw\|_1 \le V \biggr\}.
\end{equation*}
Then, for any $\vw$, there exists some element $\hat{\vw} \in W_{\varepsilon}$ such that 
\begin{align*}
\sup_{\vz \in \mathbb{X}} |\sigma(\vw^{\top}\vz) - \sigma(\hat{\vw}^{\top}\vz)| 
&\le \sup_{\vz} |(\vw^{\top}\vz) - (\hat{\vw}^{\top}\vz)| \\
&\le \sup_{\vz} |(\vw - \hat{\vw})^{\top}\vz| \\
&\le \|\vw - \hat{\vw}\|_1 \sup_{\vz}\|\vz\|_{\infty} \le \varepsilon.
\end{align*}
By Bernstein's Inequality, for any $\vw$, 
\begin{equation*}
\pr\biggl(\biggl|\frac1n \sum_{i = 1}^n \xi_i \sigma(\vw^{\top}\vz_i)\biggr| > t\biggr) \le 2\exp\biggl\{-\frac{nt^2}{2(1+t/3)}\biggr\}.
\end{equation*}
By taking the union bound over $W_{\varepsilon}$, and use the fact that $\log \textrm{card}(W_{\varepsilon}) \lesssim d\log(nd/\varepsilon)$, we obtain 
\begin{equation*}
\sup_{\vw \in \mathbb{R}^d : \|\vw\|_1 \le V} \biggl|\frac1n \sum_{i = 1}^n \xi_i \sigma(\vw^{\top}\vz_i)\biggr| \lesssim \varepsilon + \sqrt{\frac{d}{n}\log \frac{nd}{\varepsilon\delta}},
\end{equation*}
with probability at least $1 - \delta$.
Then the desired result is obtained by taking $\varepsilon = \sqrt{(d\log n)/n}, \delta = n^{-1}$ and using the fact $\bigl|n^{-1} \sum_{i = 1}^n \xi_i \sigma(\vw^{\top}\vz_i)\bigr| \le 1$ as the proof of Theorem~\ref{thm:deep-nets}.
\end{proof}

\end{appendix}




\balance
\bibliography{neural.bib}   
\bibliographystyle{IEEEtran}

\end{document}